\PassOptionsToPackage{table,x11names}{xcolor}

\documentclass[10pt]{article} 
\usepackage[accepted]{tmlr}

\usepackage{amsmath,amsfonts,bm,overpic,xcolor}









\def\eqref#1{equation~\ref{#1}}





\usepackage{hyperref}
\usepackage{url}
\usepackage{microtype}
\usepackage{graphicx}
\usepackage{subfigure}
\usepackage{booktabs} 

\usepackage{amsmath}
\usepackage{amssymb}
\usepackage{mathtools}
\usepackage{amsthm}

\usepackage[capitalize,noabbrev]{cleveref}

\theoremstyle{plain}
\newtheorem{theorem}{Theorem}[section]
\newtheorem{proposition}[theorem]{Proposition}
\newtheorem{lemma}[theorem]{Lemma}

\theoremstyle{definition}

\theoremstyle{remark}

\usepackage[textsize=tiny]{todonotes}

\usepackage[utf8]{inputenc} 
\usepackage[T1]{fontenc}    
\usepackage{booktabs}       
\usepackage{amsfonts}       
\usepackage{nicefrac}       
\usepackage{microtype}      
\usepackage{graphicx}
\usepackage{amsmath}
\usepackage{amssymb}
\usepackage{amsthm}
\usepackage{lipsum}  
\usepackage{algorithm}
\usepackage{algorithmic}
\usepackage{xspace}

\usepackage{overpic}
\usepackage{subcaption}
\usepackage{enumitem}

\usepackage{multirow}
\usepackage{xcolor}
\usepackage{colortbl}

\newcommand{\RR}{\mathbb{R}}

\usepackage{rotating} 

\usepackage{hyperref}
\hypersetup{
    colorlinks,
    linkcolor={red!50!black},
    citecolor={magenta},
    urlcolor={blue!90!black}
}
\setlength{\textfloatsep}{6pt} 
\let\AND\relax

\title{Elucidating the Design Choice of Probability Paths 
in Flow Matching for Forecasting}


\author{\name Soon Hoe Lim \email shlim@kth.se \\
\addr Department of Mathematics, KTH Royal Institute of Technology \\
Nordita, KTH Royal Institute of Technology and Stockholm University  \\
\AND \\
\name  Yijin Wang \email yijin.wang@berkeley.edu \\
\addr International Computer Science Institute  \\
\AND \\
\name Annan Yu \email ay262@cornell.edu \\
\addr Center for Applied Mathematics, Cornell University \\
\AND \\
\name Emma Hart \email emma.hart@emory.edu \\
\addr Department of Mathematics, Emory University \\
\AND \\
\name Michael W. Mahoney \email mmahoney@stat.berkeley.edu  \\
\addr  Department of Statistics, University of California at Berkeley  \\
International Computer Science Institute \\
Lawrence Berkeley National Laboratory  \\
\AND  \\
\name Xiaoye S. Li \email xsli@lbl.gov \\
\addr Lawrence Berkeley National Laboratory  \\
\AND \\
\name N. Benjamin Erichson \email erichson@icsi.berkeley.edu \\
\addr International Computer Science Institute  \\ 
Lawrence Berkeley National Laboratory 
}
	



\begin{document}

\maketitle

\begin{abstract}
Flow matching has recently emerged as a powerful paradigm for generative modeling and has been extended to probabilistic time series forecasting. However, the impact of the specific choice of probability path model on forecasting performance,  particularly for high-dimensional spatio-temporal dynamics,  remains under-explored. 
In this work, we demonstrate that forecasting spatio-temporal data with flow matching is highly sensitive to the selection of the probability path model. 
Motivated by this insight, we propose a novel probability path model designed to improve forecasting performance. 
Our empirical results across various dynamical system benchmarks show that our model achieves faster convergence during training and improved predictive performance compared to existing probability path models. 
Importantly, our approach is efficient during inference, requiring only a few sampling steps. 
This makes our proposed model practical for real-world applications and opens new avenues for probabilistic forecasting.
\end{abstract}

\section{Introduction}

Generative modeling has achieved remarkable success in recent years, especially for generating high-dimensional objects by learning mappings from simple, easily-sampled reference distributions, \(\pi_0\), to complex target distributions, \(\pi_1\). 
In particular, diffusion models have pushed the capabilities of generating realistic samples across various data modalities, including images~\citep{ho2020denoising, song2020score, karras2022}, videos~\citep{ho2022video, blattmann2023stable, gupta2023photorealistic}, and spatio-temporal scientific data like climate and weather patterns \citep{pathak2024kilometer, kohl2024benchmarking}. 
Despite their impressive performance, diffusion models often come with high computational costs during training and inference. 
Additionally, they typically assume a Gaussian reference distribution, which may not be optimal for all data types and can limit modeling flexibility.

One promising alternative is flow matching, where the mappings are learned via a stochastic process that transforms \(\pi_0\) into \(\pi_1\) through an ordinary differential equation (ODE), approximating its marginal vector flow \citep{lipman2022flow, albergo2023stochastic, liu2022flow, tong2023improving, pooladian2023multisample, lipman2024flow}. 
While score-based models \citep{song2019generative, song2020denoising, song2020score, ho2020denoising} are specific instances of flow matching, with Gaussian transition densities, the general framework allows for a broader class of interpolating  paths. 
This flexibility can lead to deterministic sampling schemes that are faster and require fewer steps \citep{zhang2022fast}.
Recent work has demonstrated the remarkable capabilities of flow matching models for generating high-dimensional images~\citep{esser2024scaling} and discrete data~\citep{gat2024discrete}.

Building on this, flow matching in latent space has recently been applied to forecasting spatio-temporal data~\citep{davtyan2023efficient} (predicting future frames in videos). 
This approach leverages latent representations to capture the complex dynamics inherent in temporal data. 
However, spatio-temporal forecasting, especially for videos and dynamical systems data, presents unique challenges. 
A video prediction model capable of generalizing to new, unseen scenarios must implicitly ``understand'' the scene: detecting and classifying objects, learning how they move and interact, estimating their 3D shapes and positions, and modeling the physical laws governing the environment \citep{battaglia2016interaction}. 
Similarly, accurate weather forecasting requires capturing intricate physical processes and interactions across multiple scales \citep{dueben2018challenges, schultz2021can}.

We observe that, in the context of spatio-temporal forecasting, the performance of flow matching is highly sensitive to the choice of the probability path model, an important topic which has not been widely explored within a unified framework. 
Different probability paths can significantly impact the accuracy and convergence of forecasting models, particularly when dealing with complex dynamical systems characterized by partial differential equations (PDEs) and chaotic behaviors.
%
Motivated by this, we propose a novel probability path model  tailored for probabilistic forecasting of dynamical systems, with a particular focus on PDE datasets relevant to scientific applications. 
Our model leverages the continuous dynamics intrinsic to spatio-temporal data by interpolating between consecutive sequential samples. This approach ensures better alignment with the constructed flow, leading to improved predictive performance, more stable training, and greater inference efficiency. Existing probability path models often fail to fully capture the continuous nature of spatio-temporal data, resulting in a misalignment with flow-based methods and suboptimal outcomes. Our proposed model addresses these limitations directly.

Building on previous approaches, we provide a theoretical framework and efficient algorithms tailored to probabilistic forecasting of high-dimensional spatio-temporal dynamics using flow matching in latent space. Within this framework, we demonstrate that our probability path model outperforms existing models across several forecasting tasks involving PDEs, achieving faster convergence during training and requiring fewer sampling steps during inference.
These advances enhance the practicality of flow matching approaches for real-world applications, particularly in scenarios where computational resources and time are critical constraints. 
Our main contributions are the following. 


\begin{itemize}
    \item 
    \textbf{Theoretical Framework and Efficient Algorithms:} 
    We present a theoretical framework and efficient algorithms for applying flow matching in latent space to the probabilistic forecasting of dynamical systems (see Algorithms \ref{alg:training}-\ref{alg:sampling}), extending the approach of \citep{lipman2022flow} and \citep{davtyan2023efficient}. 
    Our approach is specifically tailored for time series data, enabling effective modeling of complex temporal dependencies inherent in dynamical systems.     \vspace{-0.1cm}

    \item 
    \textbf{Novel Probability Path Model:} Using dynamical optimal transport and the Schr{\"o}dinger's bridge perspective (see Theorem \ref{thm_optimal_path}), 
    we motivate and propose a new probability path model (see Section \ref{subsect_newpath}) specifically designed for modeling dynamical systems data. 
    We provide theoretical insights to show that the variance of the vector field (VF) generating our proposed path can be lower than that of the optimal transport (OT) VF proposed by \citep{lipman2022flow} for sufficiently correlated spatio-temporal samples  (see Theorem \ref{thm_variance}).  We further provide intuitions to understand why our model leads to smoother training loss curve and faster convergence compared to other models.     \vspace{-0.1cm}

    \item  
    \textbf{Empirical Validation:} 
    We provide extensive empirical results to demonstrate that our proposed probability path model can outperform other  models on several forecasting tasks involving PDEs  (see Section \ref{sect_exp}).
    Our results demonstrate that the proposed probability path model outperforms existing flow matching models, achieving faster convergence during training and improved predictive performance across several evaluation metrics, while requiring fewer sampling steps. 
    \end{itemize}
    \vspace{-0.3cm}

\section{Flow Matching for Probabilistic Forecasting} 
\label{sect_framework}

In this section, we first introduce the objective of probabilistic forecasting, and then we discuss how flow matching can be used for learning  conditional distributions in latent space.

\textbf{Probabilistic forecasting framework.} 
Suppose that we are given a training set of $n$ trajectories, with each trajectory of length $m$,  $S_n = \{(\mathbf{x}^{1:m})^{(i)} \}_{i=1,\dots,n}$,  where $(\mathbf{x}^{1:m})^{(i)} = ((x^1)^{(i)}, \dots, (x^m)^{(i)})$,  (with the $(x^l)^{(i)} \in \mathbb{R}^d$), coming from an underlying continuous-time dynamical system.  
For simplicity, we denote the trajectories as $\mathbf{x}^{1:m} = (x^1, \dots, x^m)$ unless there is a need to specify the corresponding $n$. 
The trajectories are observed at arbitrary time points $\mathbf{t}_{1:m} = (t_1, \dots, t_m)$ such that $x^i := x(t_i) \in \mathbb{R}^d$ and $(x(t))_{t \in [t_1, t_m]}$ are the observed states of the ground truth system.  
In practice, we may have access to only few trajectories, i.e., $n$ is small or even $n=1$, and the trajectories themselves may be observed at different time stamps. 
%

The goal of probabilistic forecasting is to predict the distribution   of the upcoming $l$ elements given the first $k$ elements, where $m = l+k$, i.e.: $q(x^{k+1}, \dots, x^{k+l} | x^1, \dots, x^k) =  \prod_{i=1}^{l} q(x^{k+i} | x^1, \dots, x^{k+i-1})$.
We propose to model each one-step predictive conditional distribution via a probability density path. 
Instead of using score-based diffusion models to specify the path, we choose flow matching, a simpler method to train generative models. 
With flow matching, we directly work with probability paths, and we can simply avoid reasoning about (forward) diffusion processes altogether. Moreover, we shall work in a latent space.



\textbf{\bf Flow matching in latent space.}
Let $z^\tau = \mathcal{E}(x^\tau)$ for $\tau = 1,\dots, m$, where $\mathcal{E}$ denotes a pre-trained encoder that maps from the data space to a lower dimensional latent space. Working in the latent space, our goal is to approximate the ground truth distribution $ q(z^{\tau} | x^1, \dots, x^{\tau-1})$ by the parametric distribution $p(z^\tau | z^{\tau-1})$, which can then be decoded as $x^\tau = \mathcal{D}(z^\tau)$. The latent dynamics can be modeled by an ODE: 
\begin{equation}
    \dot{Z}_t = u_t(Z_t), 
\end{equation}
where $u_t$ is the (velocity) VF describing the instantenous rate of change of the state at time $t$. Learning the dynamics of the system is equivalent to approximating the VF $u_t$. 
This can be done by regressing a neural network using the mean squared error (MSE) loss.

Following the idea of flow matching, we infer the dynamics of the system generating ${\bf z}$ from the collection of latent observables by learning a time-dependent VF $v_t: [0,1] \times \mathbb{R}^d  \to \mathbb{R}^d$, $t \in [0,1]$, such that the ODE
\begin{equation} \label{eq_flow}
    \dot{\phi}_t(Z) = v_t(\phi_t(Z)), \ \ \phi_0(Z) = Z, 
\end{equation}
defines a time-dependent diffeomorphic map (called a flow), $\phi_t(Z): [0,1] \times \mathbb{R}^d \to \mathbb{R}^d$, that pushes a reference distribution $p_0(Z)$ towards the distribution $p_1(Z) \approx q(Z)$ along some probability density path $p_t(Z)$ and the corresponding VF  $u_t(Z)$. 
In other words, $p_t = [\phi_t]_* p_0$, where $[\cdot]_*$ denotes the push-forward operation. 
Here, $q$ is the ground truth  distribution, $p$ denotes a probability density path, i.e., $p: [0,1] \times \mathbb{R}^d \to \mathbb{R}_{>0}$, and $\int p_t(Z) dZ = 1$. 
We also write $Z_t = \phi(Z)$; and thus the ODE can be written as $\dot{Z}_t = v_t(Z_t), Z_0 = Z$. 
Typically the reference distribution $p_0$ is chosen to be the standard Gaussian \citep{lipman2022flow, liu2022flow}.

In other words, the main goal of flow matching is to learn a deterministic coupling between $p$ and $q$ by learning a VF $v_t$ such that the solution to the ODE (\ref{eq_flow}) satisfies $Z_0 \sim p$ and $Z_1 \sim q$. When ${\bf Z} = (Z_t)_{t \in [0,1]}$ solves Eq. (\ref{eq_flow}) for a given function $v_t$, we say that ${\bf Z}$ is a flow with the VF $v_t$. 
If we have such a VF, then $(Z_0, Z_1)$ is a coupling of $(p, q)$. 
If we can sample from $p$, then we can generate approximate samples from the coupling by sampling $Z_0 \sim p$ and numerically integrating Eq. (\ref{eq_flow}). This can be viewed as a continuous normalizing flow \citep{chen2018neural}.

If one were given a pre-defined probability path $p_t(Z)$ and the corresponding VF $u_t(Z)$ that generates the path, then one could parametrize $v_t(Z)$ with a neural network $v^\theta_t(Z)$, with $\theta$ the learnable parameter, and solve the least square regression by minimizing the flow matching  loss: 
\begin{equation} \label{original_opt}
    \min_{\theta} \mathcal{L}_{fm}(\theta) :=  \mathbb{E}_{t, p_t(Z)} \ 
\omega(t) \| v^\theta_t(Z) - u_t(Z) \|^2, 
\end{equation}
where $t \in \mathcal{U}[0,1]$, $Z \sim p_t(Z)$ and $\omega(t) > 0$ is a weighting function. The weighting function weights the importance of the $L^2$ loss at different times $t$ (noise level),  balancing the importance of high frequency and low frequency components. Following the standard framework \citep{lipman2022flow}, we take $\omega(t) = 1$. Note that we choose to include $\omega(t)$ in Eq. (\ref{original_opt}) for generality as different choice of weighting function corresponds to different parametrization of the network output (see \citep{anonymous2025diffusionmodelsand} for more details).

However, we do not have prior knowledge for choosing $p_t$ and $u_t$, and there are many choices that can satisfy $p_1 \approx q$. Moreover, we do not have access to a closed form $u_t$ that generates the desired $p_t$. We shall follow the approach of \citep{lipman2022flow} and construct a target probability path by mixing simpler conditional probability paths.  This probability path is the marginal probability path:
\begin{equation} \label{eq_marginalpath}
    p_t(Z) = \int p_t(Z | \tilde{Z}) q(\tilde{Z}) d\tilde{Z},
\end{equation}
obtained by marginalizing the conditional probability density paths $p_t(Z|\tilde{Z})$ over observed latent trajectories $\tilde{Z}$, with $p_0(Z|\tilde{Z}) = p(Z)$ and $p_1(Z|\tilde{Z}) = \mathcal{N}(Z | \tilde{Z}, \epsilon^2 I)$ for a small $\epsilon > 0$. Doing so gives us a marginal probability $p_1$ which is a mixture distribution that closely approximates $q$. Then, assuming that $p_t(Z) > 0$ for all $Z$ and $t$, we can also define a marginal VF as: 
\begin{equation} \label{eq_marginalvf}
    u_t(Z) = \int u_t(Z| \tilde{Z}) \frac{p_t(Z|\tilde{Z}) q(\tilde{Z})}{p_t(Z)} d\tilde{Z},  
\end{equation}
where $u_t(Z|\tilde{Z})$ is a conditional VF (conditioned on the latent trajectory $\tilde{Z}$). 
It turns out that this way of mixing the conditional VFs leads to the correct VF for generating the marginal probability path  (\ref{eq_marginalpath}). We can then break down the intractable marginal VF into simpler conditional VFs which depends on a single sample.

To deal with the intractable integrals in Eq. (\ref{eq_marginalpath})-(\ref{eq_marginalvf}) which complicates computation of an unbiased estimator of $\mathcal{L}_{fm}$, we shall minimize the conditional loss proposed by \citep{lipman2022flow}:
\begin{equation} \label{conditional_loss}
    \min_{\theta} \mathcal{L}_{cfm}(\theta) :=  \mathbb{E}_{t, p_t(Z|\tilde{Z}), q(\tilde{Z}) } \ \omega(t) \| v^\theta_t(Z) - u_t(Z | \tilde{Z}) \|^2,
\end{equation}
where $t \in \mathcal{U}[0,1]$, $\tilde{Z} \sim q(\tilde{Z})$, $Z \sim p_t(Z|\tilde{Z})$, and $u_t(Z| \tilde{Z})$ is the VF defined {\it per sample} $\tilde{Z}$ that generates the conditional probability path $p_t(Z|\tilde{Z})$. Importantly, one can show that the solution of  (\ref{conditional_loss}) is guaranteed to converge to the same result in  (\ref{original_opt}); see Theorem \ref{thm_1} in App. \ref{app_proof}. Therefore, the conditional flow matching loss can match the pre-defined target probability path, constructing the flow that pushes $p_0$ towards $p_1$. Since both the probability path and VF are defined per sample, we can sample unbiased estimates of the conditional loss efficiently, particularly so with suitable choices of conditional probability paths and VFs.


\section{Probability Path Models for Probabilistic Forecasting}   \label{sect_model}

In this section, we describe the family of probability paths that we consider for flow matching, and we propose an improved model tailored for probabilistic forecasting of spatio-temporal data. 

\subsection{Common Probability Path Models} 

The family of Gaussian conditional probability paths gives us tractable choices to work with since  the relevant quantities in Eq. (\ref{conditional_loss}) and thus the conditional flow can be defined explicitly. Therefore, we will work with Gaussian probability paths. Moreover, we  are going to solve  (\ref{conditional_loss}) over the dataset of all transition pairs $\mathcal{D}_{pair} = \{(z^{\tau-1}, z^{\tau})\}_{\tau=2,\dots,m}$, and use a pair of points for $\tilde{Z}$,  setting $\tilde{Z} = (Z_0, Z_1) \in \mathcal{D}_{pair}$. 
In particular, we consider the following class of models for the probability paths: 
\begin{equation} \label{eq_generalclass}
p_t(Z| \tilde{Z} := (Z_0, Z_1)) = \mathcal{N}(Z | a_t Z_0 + b_t Z_1, c^2_t I),
\end{equation}
where $a_t$, $b_t$ and $c_t$ are differentiable time-dependent functions on $[0,1]$, and $I$ denotes the identity. 

Table \ref{tab_choices} provides five different choices of probability paths, including our proposed choice,  that we study here. 
The optimal transport (OT) VF model  was initially proposed by \citep{lipman2022flow}, and setting $\epsilon_{min}= 0$  gives us the rectified flow model  of \citep{liu2022flow}, which proposed connecting data and noise on a straight line. 
The stochastic interpolant (SI) model  in Table \ref{tab_choices}  is the one considered by \citep{chen2024probabilistic}. 
The VE- and VP-diffusion conditional VFs (derived with Theorem \ref{thm_gaussianpaths})  coincide with the VFs governing the Probability Flow ODE for the VE and VP diffusion paths proposed in \citep{song2020score}.  
It has been shown that combining diffusion conditional VFs  with the flow matching objective leads to a training alternative that is more stable and robust when compared to existing score matching approaches \citep{lipman2022flow}.

\begin{table}[!t]
	\caption{Choices of probability density paths that we study in this paper. For VE-diffusion, $\sigma_t$ is increasing in $t$, $\sigma_0 = 0$. For VP-diffusion, $\beta =$ noise scale.}
	\label{tab_choices}
	\centering
	\scalebox{0.85}{
		\begin{tabular}{l c c c}
			\toprule
			Model &  $a_t$ &  $b_t$ & $c_t^2$ \\	
			\midrule 
			VE-diff.  \citep{song2020score, lipman2022flow}  & 1 & 0 & $\sigma_{1-t}^2$ \\
			VP-diff. \citep{song2020score, lipman2022flow}  & $e^{-\frac{1}{2} T(1-t) }$ & 0 & $ 1- e^{-T(1-t)}, T(t) = \int_0^t \beta(s) ds$ \\
            OT-VF \citep{liu2022flow}  &  $t$ & 0 & $(1-(1-\epsilon_{min}) t)^2 $, $\epsilon_{min} \geq 0$\\
            SI \citep{chen2024probabilistic} & $1-t$ & $t$ or $t^2$ & $\epsilon^2 t(1-t)^2$, $\epsilon > 0$  \\
                \textbf{Ours}          &  $1-t$ & $t$ & $\sigma_{min}^2 +\sigma^2 t(1-t)$, $\sigma_{min}, \sigma \geq 0$   \\
			\bottomrule \\
	\end{tabular}}
\end{table} 

As remarked in \citep{lipman2022flow}, there are many choices of VFs that generate a given probability path. We shall use the simplest VF that generates flow whose map is affine linear.  Let $p_t(Z|\tilde{Z})$ be the Gaussian probability path defined in Eq. (\ref{eq_generalclass}) and consider the flow map $\psi_t$ defined as $\psi_t(Z) := a_t Z_0 + b_t Z_1 + c_t Z$ with $c_t >0$. Then the unique VF that defines $\psi_t$ is (see Theorem \ref{thm_gaussianpaths} and the proof in App. \ref{app_proof}):
    \begin{equation} \label{gen_vf_formula}
    u_t(Z|\tilde{Z}) = \frac{c_t'}{c_t} (Z- (a_t Z_0 + b_t Z_1)) + a_t' Z_0 + b_t' Z_1,
    \end{equation}
    where prime denotes derivative with respect to $t$, and $u_t(Z|\tilde{Z})$ generates the Gaussian path $p_t(Z|\tilde{Z})$.


In view of this, minimizing the conditional loss becomes:
\begin{align} \label{conditional_opt}
    &\min_{\theta} \mathcal{L}_{cfm}(\theta) :=  \mathbb{E}_{t, z^\tau, z^{\tau-1}, Z } \ \omega(t) \bigg\| v^\theta_t(Z) -   
 \frac{c_t'}{c_t} (Z- (a_t z^\tau + b_t z^{\tau-1} )) - a_t' z^\tau - b_t' z^{\tau-1} \bigg\|^2,
\end{align}
where $t \sim \mathcal{U}[0,1]$, $Z \sim p_t(Z| z^\tau, z^{\tau-1})$ and ${\bf z} \sim q({\bf z})$.  
We refer to this as the Flow Matching loss parametrization and shall work with this parametrization. There are other parametrizations: most popular ones are the Score Matching loss, Score Flow loss and DDPM loss. See App. \ref{app_lossparam} for a comparison of different loss parametrizations and App. \ref{app_connect} for connections to SDE based generative models.

In practice, it may be beneficial to learn the  vector field $v_t^\theta$  using additional context information, in which case  (\ref{conditional_opt}) becomes:
\begin{align} \label{conditional_opt_practice}
    &\min_{\theta} \mathcal{L}_{cfm}(\theta) :=  \mathbb{E}_{t, z^\tau, z^{\tau-1}, Z, C } \  \omega(t) \bigg\| v^\theta_t(Z | C) -   
 \frac{c_t'}{c_t} (Z- (a_t z^\tau + b_t z^{\tau-1} )) - a_t' z^\tau - b_t' z^{\tau-1} \bigg\|^2,
\end{align}
where $C$ represents the context information. The choice of $C$ is task-dependent. For spatio-temporal tasks, we adopt the sparse conditioning scheme of \cite{davtyan2023efficient}, choosing $C$ to be past  frames as follows. Given a sampled frame $z^\tau$, we sample another index $c$ uniformly from $\{1, \cdots , \tau - 2\}$ and use $z^c$, which we call context frame, together with the previous frame $z^{\tau-1}$, as the two conditioning frames. Thus, in this case $C = (z^{\tau-1}, z^{c}, \tau - c)$ (see Algorithm \ref{alg:training}). While conditioning on as many past frames as possible is desirable to improve prediction accuracy, sparse conditioning is sufficient to achieve favorable trade-off between computational efficiency and accuracy \citep{davtyan2023efficient}. Datasets with more challenging dynamics may require the use of more context frames.

\subsection{Optimal Probability Paths}
We consider the problem of selecting the optimal probability paths within  the class of the Gaussian probability paths. We shall make use of the Schr{\"o}dinger bridge \citep{leonard2013survey, chen2021stochastic} perspective, and seek to find the {\it optimal} stochastic processes that evolve a given measure into another, subject to marginal constraints and based on a {\it prior belief}. This optimal process describes a novel probability path model that we propose for probabilistic forecasting of spatio-temporal data. 

To be more precise, let $\nu, \nu'$ be two given probability measures and let $\mathbb{Q}$ be the path measure of an arbitrary stochastic process. The Schr{\"o}dinger bridge (SB) is the solution to the following constrained minimization problem over all path measures $\mathbb{P}$ (which are absolutely continuous with respect to $
\mathbb{Q}$) of stochastic processes  on the finite time interval $[0,1]$:
\begin{equation}
\min_{\mathbb{P}_0 = \nu, \  \mathbb{P}_1 = \nu'} D_{KL}(\mathbb{P} \| \mathbb{Q}),    
\end{equation}
where $D_{KL}$ denotes the Kullback–Leibler divergence and $\mathbb{P}_t$ denotes the time marginal of $\mathbb{P}$ at time $t$.
Typically $\nu$ and $\nu'$ are the (empirical) marginal distributions of an unknown continuous-time dynamics observed at the initial and terminal times, and $\mathbb{Q}$ is the path measure of a prior (or reference) process that represents our belief of the dynamics before observing the data. The solution $\mathbb{P}^*$ to the problem can then be interpreted as the optimal dynamics that conforms to the prior belief $\mathbb{Q}$ while respecting the data marginals $\mathbb{P}_0^* = 
\nu$, $\mathbb{P}_1^* = \nu'$. In other words, the SB is the (path measure of the) finite-time diffusion which admits as initial
and terminal distributions the two distributions of interest and is the closest in KL
divergence to (the path measure of) a reference diffusion. Recent work \citep{shi2024diffusion, pooladian2024plug} focuses on developing improved algorithms to solve SB problems for general classes of reference diffusions. SBs have also been used to formulate generative models by interpolating distributions on a finite time interval \citep{wang2021deep, de2021diffusion, chen2021likelihood, peluchetti2023diffusion, tong2023simulation, liu2023generalized, gottwald2024localized}.

We will consider Gaussian prior processes for $\mathbb{Q}$.
The Gaussian probability paths that we consider can be formulated via the differential equation: $dZ_t = (\dot{a}_t Z_0 + \dot{b}_t Z_1 + \dot{c}_t \xi) dt$, where $
\xi \sim \mathcal{N}(0,I)$. Note that $
\mathbb{E}[\dot{Z}_t | Z_0, Z_1] = \dot{a}_t Z_0 + \dot{b}_t Z_1 =: \alpha_t$. We shall take $\mathbb{Q}$ to be the path measure of the linear SDE with  drift $\alpha_t$:
\begin{equation}
    dY_t = \alpha_t  dt + \omega dW_t, \ t \in [0,1],
\end{equation}
where $\omega > 0$ is a regularity parameter and $W_t$ is the standard Wiener process. While many choices for $\mathbb{Q}$ exists, the rationale behind our choice is that the Gaussian process $Y_t$ admits a minimal representation $Y_t = \mathbb{E}[Z_t | Z_0, Z_1] + \omega W_t$ that incorporates Brownian motion as a reference noise to characterize the stochasticity surrounding the conditional estimation of $Z_t$. Importantly, such choice allows interpretation of the SBs  as generalized dynamical optimal transport (DOT) \citep{chewi2024statistical} between two (not necessarily Gaussian) measures. In particular, adapting Theorem 1 in \citep{bunne2023Schrodinger} to our setting, we have:

\begin{proposition}
    Consider the SB problem with $Y_t$ as the reference process:
    \begin{equation} \label{gen_SB}
        \min_{\mathbb{P}_0 = \nu, \  \mathbb{P}_1 = \nu'} D_{KL}(\mathbb{P} \| \mathbb{Y}),     
    \end{equation}
    where $\mathbb{Y}$ is the path measure induced by $(Y_t)_{t \in [0,1]}$.
    Then, (\ref{gen_SB}) is equivalent to $\inf_{(\rho_t, v_t)} \mathbb{E} \left[ \int_0^1  C(\rho_t, v_t) \ dt \right]$, with
    \begin{equation} \label{GDOT}
        C(\rho_t, v_t) := \frac{\|v_t\|^2}{2 \omega^2} + \frac{\omega^2}{8} \|\nabla \log \rho_t \|^2 - \frac{1}{2} \langle \alpha_t, \nabla \log \rho_t \rangle,
    \end{equation}
    where the infimum is taken over all pairs $(\rho_t, v_t)$ such that $\rho_0 = \nu$, $\rho_1 = \nu'$, $\rho_t$ absolutely continuous, and satisfies  $\partial_t \rho_t = - \nabla \cdot ( (v_t + \alpha_t) \rho_t)$.
\end{proposition}

The DOT problem (\ref{GDOT}) can be seen as a generalized version of DOT with quadratic cost (minimizing a "kinetic energy"); see  Eq. (4.41a)-(4.41c) in \citep{lipman2024flow}. The GDOT (\ref{GDOT}) not only minimizes the quadratic cost, but also minimizes the Fisher information and maximizes the expected alignment of the score function with the drift $\alpha_t$.

In our case, given a pair $(Z_0, Z_1)$ of data points, we are interested in constructing   SBs where the marginal constraints are Gaussians centered around $Z_0$ and $Z_1$, which we denote as $\xi_0 := \mathcal{N}(Z_0, 
\sigma_{min}^2 I)$ and $\xi_1 := \mathcal{N}(Z_1, 
\sigma_{min}^2 I)$ respectively, for some given $\sigma_{min} \geq 0$. These are the Gaussian SBs: 
\begin{equation} \label{Gaussian_SB}
        \min_{\mathbb{P}_0 = \xi_0, \  \mathbb{P}_1 = \xi_1} D_{KL}(\mathbb{P} \| \mathbb{Y}).   
\end{equation}
It turns out that the solution of these Gaussian SBs admits a closed-form expression \citep{bunne2023Schrodinger}.

\begin{theorem} \label{thm_optimal_path}
    The solution $\mathbb{P}^*$ to the Gaussian SB (\ref{Gaussian_SB}) is the path measure of a Markov Gaussian process with the marginal variable $X_t \sim \mathcal{N}(\mu_t, \sigma^2_t I)$, where 
    \begin{align}
        \mu_t &= (1-t + a_t-a_0 - t (a_1 - a_0)) Z_0 + (t  + b_t - b_0  - t(b_1 - b_0)) Z_1,    
    \\
    \sigma_t^2 &= \sigma_{min}^2 + \sigma^2 t(1-t), 
    \end{align}
    with $\sigma^2 = \sqrt{4 \sigma_{min}^4 + \omega^4} - 2 \sigma_{min}^2 > 0$.
\end{theorem}

See App. \ref{app_GSB} for the proof. In particular, imposing the boundary constraints $a_0 = b_1 =  1, a_1 = b_0 = 0$ on the functions $a_t$ and $b_t$, we have $\mu_t = a_t  Z_0 +  b_t Z_1$ in Theorem \ref{thm_optimal_path}. Note that $\sigma_t^2$ is independent of $a_t$ and $b_t$.

\subsection{A Novel Probability Path Model}  \label{subsect_newpath}

Our proposed probability path model is a stochastic extension of the straight-line trajectory connecting consecutive latent samples $(Z_0, Z_1)$, i.e., $\mu_t = (1-t)Z_0 + tZ_1$. 
This choice of $a_t$ and $b_t$ can be motivated by the following optimality principle. Given a pair of data points $(Z_0, Z_1) \in \RR^d \times \RR^d$, the linear interpolation path $\mu_t$ arises naturally as the solution to the following variational problem: 
\begin{equation}
    \min_{\mu: [0,1] \to \RR^d} \int_0^1 \|\dot{\mu}_t\|^2 dt \ \text{ subject to } \mu_0 = Z_0, \mu_1 = Z_1.
\end{equation}
This is a classical energy minimization problem, where the goal is to find the smoothest  curve connecting the endpoints $Z_0$ and $Z_1$. It can be solved using the Euler-Lagrange equations under the given  boundary conditions. This formulation corresponds to minimizing the kinetic energy of the curve among all smooth interpolations between $Z_0$ and $Z_1$. The solution, which is constant-velocity motion along the straight line, has a natural physical interpretation: it is the least effort way to move from $Z_0$ to $Z_1$ in Euclidean space.

More generally, when the boundary conditions are probability distributions (e.g., Gaussians centered at $Z_0$ and $Z_1$), this linear interpolation  corresponds to the displacement interpolation in the 2-Wasserstein space. In the case where the marginals are Gaussian and the cost function is quadratic, the interpolating path of distributions is also Gaussian with linearly interpolated means, minimizing an upper bound on the expected kinetic energy among all stochastic paths with prescribed marginals at $t=0$ and $t=1$; see Section 4.7 in \citep{lipman2024flow} for more details. Alternative interpolants are possible, but the linear form admits analytical tractability and natural interpretation in terms of velocity fields induced by diffusion processes. Importantly, this leads to trajectories that are generally easier to sample with ODE solvers, leading to ODE simulations with smaller errors \citep{liu2022flow}. 

For the stochastic extension, we build on the deterministic backbone and relax the boundary constraints to Gaussians centered around $Z_0$ and $Z_1$. We shall look for the "most likely" probability paths connecting the boundary distributions as follows. First, we consider the noise perturbed paths $Z_t = \mu_t + \omega W_t$, where $W_t$ is the standard Wiener process, as our reference process.  Then, we take the solution to the Schr{\"o}dinger bridge (\ref{Gaussian_SB}) with respect to the reference process as our proposed probability path model.

More precisely, in view of Theorem \ref{thm_optimal_path}, we propose to choose  $a_t = 1-t$, $b_t = t$, $c^2_t = \sigma_{min}^2 + \sigma^2 t(1-t)$,  in which case we have the probability path described by: 
\begin{equation} \label{eq_pttopt}
p_t(Z| \tilde{Z}) = \mathcal{N}(Z | t Z_1 + (1-t) Z_0, (\sigma_{min}^2 + \sigma^2 t(1-t) ) I),
\end{equation}
which transports a Gaussian distribution centered around $Z_0$ with variance $\sigma_{min}^2$ at $t=0$ to a Gaussian distribution  centered around $Z_1$ at $t=1$ with variance $\sigma_{min}^2$.  
Here $ \sigma_{min}, \sigma \geq  0 
$ are tunable parameters. 
In the case when $\sigma_{min} = 0$, it describes a Brownian bridge that interpolates between $Z_0$ and $Z_1$ \citep{gasbarra2007gaussian}. To ensure numerical stability when sampling $t \sim \mathcal{U}[0,1]$, it is beneficial to use a small $\sigma_{min} > 0$.
Note that $\sigma^2$ is a scale factor determining the magnitude of fluctuations around the path interpolating between $Z_0$ and $Z_1$. The variance $c_t^2$ is minimum with the value of $\sigma_{min}^2$ at the endpoints $t=0$ and $t=1$, and the maximum variance is $\sigma_{min}^2 + \sigma^2/4$ which occurs in the middle of the path at $t=1/2$.

The variance schedule is designed to balance exploration and stability.  Low variance at the start ensures stable initialization, preventing the trajectory from deviating too far from the initial distribution.
High variance in the middle allows the model to explore diverse paths in the latent space, avoiding mode collapse and enhancing diversity in the generated trajectories.  Low variance at the end sharpens the trajectory, ensuring accurate reconstruction of the desired output. This strategy is inspired by findings in diffusion models that utilize a forward noising process and a backward denoising process \citep{ho2020denoising, song2020score}, where such variance patterns have been shown to effectively manage the trade-off between exploration and refinement. 

The corresponding VF that defines the flow is then given by (applying Theorem \ref{thm_gaussianpaths}):
\begin{equation} \label{eq_targetVF}
u_t(Z|\tilde{Z}) = Z_1 - Z_0 + \frac{\sigma^2}{2} \frac{1-2t}{\sigma_{min}^2 + \sigma^2 t(1-t ) } \epsilon,
\end{equation}
where $\epsilon := Z - (t Z_1 + (1-t) Z_0)$.
Under reasonable assumptions, the variance of this VF is lower than the variance of  the OT-VF of \citep{lipman2022flow} with $\epsilon_{min} := 0$ (rectified flow). Here the variance is taken with respect to the randomness in the samples $z^\tau$ and the Gaussian samples drawn during  gradient descent updates.

More precisely, denote the VF that corresponds to our model as $u_t$ and the VF that corresponds to the rectified flow model as $\tilde{u}_t$. For a given $\tau$, they generate the probability path $Z_t = t z^\tau  + (1-t) z^{\tau-1} + c_t \xi$ and $\tilde{Z}_t = t z^{\tau-1} + (1-t) \eta$ respectively, where $\xi, \eta \sim \mathcal{N}(0,I)$, $c_t = \sqrt{\sigma_{min}^2 + \sigma^2 t(1-t)}$ and $t \in [0,1]$.
Applying Eq. (\ref{gen_vf_formula}), we have:
\begin{align}
    u_t(Z_t|z^{\tau-1}, z^\tau) &=  z^\tau - z^{\tau-1} + c_t' \xi,  \\ 
     \tilde{u}_t(\tilde{Z}_t|z^{\tau-1}) &= z^{\tau-1} - \eta.
\end{align}

\begin{theorem} \label{thm_variance}
    Suppose that $(z^\tau)_{\tau=1,\dots,m}$, with the $z^\tau \in \RR^d$, is a discrete-time stochastic process with nonzero correlation in time and let $t \in [0,1]$ be given.
    If $Cov(z^{\tau-1}, z^\tau) \geq \frac{1}{2}\left( \left( \frac{\sigma^4}{ 4 \sigma_{min}^2} - 1 \right) I + Var(z^\tau)  \right)$, then 
    $Var(\tilde{u}_t(\tilde{Z}_t|z^{\tau-1})) \geq Var(u_t(Z_t|z^{\tau-1}, z^\tau)).$
\end{theorem}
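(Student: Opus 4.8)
The plan is to apply Lemma~\ref{lemma_1} directly, identifying the four random vectors in the lemma with the appropriate pieces of the two vector fields. Recall that
\begin{align*}
u_t(Z_t|z^{\tau-1}, z^\tau) &= z^\tau - z^{\tau-1} + c_t' \xi, \\
\tilde{u}_t(\tilde{Z}_t|z^{\tau-1}) &= z^{\tau-1} - \eta,
\end{align*}
with $\xi, \eta \sim \mathcal{N}(0,I)$ independent of each other and of the process $(z^\tau)$. The natural matching is $A = z^{\tau-1}$, $B = z^\tau$, $C = c_t'\xi$, and $D = -\eta$, so that $A - B + C = z^{\tau-1} - z^\tau + c_t'\xi$ and $A + D = z^{\tau-1} - \eta$. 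First I would note that, because $C = c_t'\xi$ is a centered Gaussian with $Var(C) = (c_t')^2 I$ and $D = -\eta$ has $Var(D) = I$, and because $\xi$ is symmetric, $Var(z^{\tau-1} - z^\tau + c_t'\xi) = Var(z^\tau - z^{\tau-1} + c_t'\xi) = Var(u_t(Z_t|z^{\tau-1},z^\tau))$; this is the $Var(A-B+C) = Var(B-A+C)$ identity in the lemma. Likewise $Var(A+D) = Var(z^{\tau-1} - \eta) = Var(\tilde{u}_t(\tilde{Z}_t|z^{\tau-1}))$, since $-\eta =^d \eta$.

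Next I would check the hypotheses of Lemma~\ref{lemma_1}: $C$ and $D$ are independent (separate Gaussian draws), and both $A = z^{\tau-1}$ and $B = z^\tau$ are independent of $C$ and $D$ (the Gaussian noise is drawn fresh during the gradient update, independently of the data), while $A$ and $B$ may be correlated — exactly the setting of the lemma. The lemma's conclusion $Var(A+D) \geq Var(A-B+C)$ then holds provided $Cov(A,B) \geq (Var(C) - Var(D) + Var(B))/2$, i.e.
\[
Cov(z^{\tau-1}, z^\tau) \;\geq\; \frac{1}{2}\left( (c_t')^2 I - I + Var(z^\tau) \right).
\]
So the remaining task is purely a computation of $c_t'$: with $c_t^2 = \sigma_{min}^2 + \sigma^2 t(1-t)$ we get $2 c_t c_t' = \sigma^2(1-2t)$, hence $(c_t')^2 = \sigma^4 (1-2t)^2 / \big(4(\sigma_{min}^2 + \sigma^2 t(1-t))\big)$. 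I would then bound this quantity over $t \in [0,1]$: the numerator $(1-2t)^2 \le 1$ and the denominator is minimized at the endpoints $t \in \{0,1\}$ where it equals $\sigma_{min}^2$, so $(c_t')^2 \le \sigma^4/(4\sigma_{min}^2)$ for all $t$. Substituting this bound gives $\tfrac12((c_t')^2 I - I + Var(z^\tau)) \preceq \tfrac12\big((\tfrac{\sigma^4}{4\sigma_{min}^2} - 1)I + Var(z^\tau)\big)$, so the stated hypothesis $Cov(z^{\tau-1},z^\tau) \succeq \tfrac12\big((\tfrac{\sigma^4}{4\sigma_{min}^2}-1)I + Var(z^\tau)\big)$ implies the lemma's hypothesis, and we conclude $Var(\tilde u_t) \succeq Var(u_t)$.

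The main obstacle, such as it is, is bookkeeping rather than any genuine difficulty: one must be careful that the lemma is stated with a fixed $t$ (so $c_t'$ is a constant, not a random quantity) and that the ``variance'' here is the matrix-valued covariance with the Loewner order, so every inequality — including the endpoint bound on the denominator of $(c_t')^2$ and the final substitution — must be read in the positive-semidefinite sense; since $(c_t')^2 I$ and $I$ are scalar multiples of the identity this is harmless, but it should be said explicitly. One should also confirm the modeling assumption that justifies independence of $(z^{\tau-1}, z^\tau)$ from the noise vectors $\xi, \eta$, which is how the samples are generated in Algorithms~\ref{alg:training}--\ref{alg:sampling}; this is where the phrase ``variance taken with respect to the randomness in the samples and the Gaussian samples drawn during gradient descent'' in the preamble to the theorem does its work. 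No step requires a nontrivial estimate, so I would keep the proof to essentially the three moves above: reduce to the lemma, verify hypotheses, and bound $(c_t')^2$.
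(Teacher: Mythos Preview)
Your proposal is correct and follows essentially the same route as the paper's proof: identify $A = z^{\tau-1}$, $B = z^\tau$, $C = c_t'\xi$, $D = -\eta$, invoke Lemma~\ref{lemma_1}, compute $(c_t')^2$, and bound it by $\sigma^4/(4\sigma_{min}^2)$. Your bound on $(c_t')^2$ via separate numerator/denominator estimates is a minor stylistic variation (the paper simply asserts $f(t)$ is maximized at the endpoints), and your added remarks on the Loewner order and the independence assumptions are more careful than the paper's presentation but do not change the argument.
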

See App. \ref{app_explain} for a proof and related discussions. Theorem \ref{thm_variance} implies that if the consecutive latent variables $z^\tau$, $z^{\tau-1}$ are sufficiently correlated and $\sigma$ is chosen small enough relative to $\sigma_{min}$, then the variance of the VF that corresponds to our probability path model is lower than that corresponds to the rectified flow model.



\section{An Efficient Probabilistic Forecasting Algorithm}

In this section, we present efficient algorithms for training and inferencing the flow matching model. 

Recently \citep{davtyan2023efficient} proposed an efficient algorithm for latent flow matching for the task of video prediction, using the probability path generated by the OT-VF of \citep{lipman2022flow}.  To enable efficient training, we shall follow \citep{davtyan2023efficient} and leverage the
iterative nature of sampling from the learned flow and use a single random conditioning element from the past at each iteration.  However, our method differs from \citep{davtyan2023efficient} as we use different probability paths and target VFs.  

\noindent {\bf Training.} 
Recall that $\tilde{Z} = (Z_0, Z_1)$ denote training samples from $\mathcal{D}_{pair} = \{(z^{\tau-1}, z^{\tau})\}_{\tau=2,\dots,m}$. In other words, 
we set $Z_1$ to be the target element and $Z_0$ to be the reference element chosen to be the previous element before the target element. In this way, our target probability path model maps a distribution centered around a previous state to the distribution of the current states, which is more natural from the point of view of probabilistic forecasting whose goal is to obtain an ensemble of forecasts. Note that this differs from the approach of \citet{davtyan2023efficient}, where $\tilde{Z} = Z_1$ (i.e., they do not use a reference element to define their probability path, whereas we use a pair of elements $(Z_0, Z_1)$). 
Algorithm \ref{alg:training} summarizes the training procedure of our method. 


Both the autoencoder and the VF neural network  can also be jointly trained in an end-to-end manner, but our empirical results show that separating the training can lead to improved performance. Moreover, doing so allows us to better assess the impact of using different probability paths. 


\begin{small}
\begin{algorithm}[!h]
    \caption{Flow matching for spatio-temporal data}\label{alg:training}
    \begin{algorithmic}
        \STATE {\bf Input:} Dataset of sequences $D$, number of iterations $M$
        \FOR{$i$ in range(1, $M$)}
        \STATE Sample a sequence $\mathbf{x}$ from the dataset $D$
        \STATE Encode it with a pre-trained encoder to obtain ${\bf z}$
        \STATE Choose a random target element $z^{\tau}, \tau \in \{3, \dots |\mathbf{z}|\}$, from ${\bf z}$
        \STATE Sample a step $t \sim U[0, 1]$
        \STATE Sample a noisy observation $Z \sim p_t(Z \,|\, z^{\tau}, z^{\tau-1})$, where $p_t$ is given by  Eq. (\ref{eq_generalclass})
        \STATE Compute $u_t(Z \,|\, z^{\tau}, z^{\tau-1})$ according to (\ref{eq_targetVF})
        \STATE Sample a condition frame $z^{c}, c \in \{1, \dots \tau - 2\}$
        \STATE Update the parameters $\theta$  via gradient descent 
        \begin{align*}
            \nabla_{\theta}  \| v^\theta_{t}(Z \,|\, z^{\tau-1}, z^{c}, \tau - c ) - u_t(Z \,|\, z^{\tau}, z^{\tau-1}) \|^2
        \end{align*}
        \ENDFOR
        \STATE {\bf Return:} A learned VF, $v_t^{\theta^*}$
    \end{algorithmic}
\end{algorithm}
\end{small}

\vspace{0.3cm}
\noindent {\bf Inference.}
We use an ODE sampler during inference to generate forecasts. 
The ODE sampler is described as follows.
Let $(Y_i^{\tau})_{i=1,\dots, N-1}$ denote  the generation process, where $N-1$ is the number of integration steps and the superscript $\tau$ denotes the time index for which the generation/forecast is intended for.  Given the previous elements $(x^1, \dots, x^{T-1})$ of a time series sample, in order to generate the next element (i.e., the $T$-th element), we start with sampling the initial condition $Y_0^T$ from $\mathcal{N}(z^{T-1}, \sigma_{sam}^2 I)$ for some small $\sigma_{sam} \geq 0$, where $z^{T-1} = \mathcal{E}(x^{T-1})$. This is in contrast to the procedure of \citep{davtyan2023efficient}, which simply uses a mean-zero Gaussian sample instead.  We then use an ODE solver to integrate the learned VF along the time interval $[0, 1]$ to obtain $Y_{N-1}^T$. We use $\mathcal{D}(Y_{N-1}^T)$ as an estimate of $x^{T}$, and forecasting is done autoregressively.  

Algorithm \ref{alg:sampling} summarizes this procedure when the sampling is done using the forward Euler scheme. Note that we can also use computationally more expensive numerical schemes such as the Runge-Kutta (RK) schemes. 

\begin{algorithm}[!h]
    \caption{One-step ahead forecasting with forward Euler}\label{alg:sampling}
    \begin{algorithmic}
        \STATE {\bf Input:}  A sequence $(x^1, \dots, x^{T-1})$ containing the previous elements, number of integration steps $N-1$, grid $s_0 = 0 < s_1 < \dots < s_{N-1} = 1$, a learnt VF $v_{s}^{\theta^*}$ for $s \in [0,1]$
        \STATE Set $\Delta s_n = s_{n+1} - s_n$ for $n=0, \dots, N-2$ 
        \STATE Sample $Y^T_0 \sim \mathcal{N}(\mathcal{E}(x^{T-1}),  \sigma_{sam}^2 I)$, $\sigma_{sam} \geq 0$ 
        \FOR{$n$ in range(0, $N-1$)}
        \STATE Sample $c \sim \mathcal{U}(2, \dots, T-1)$ 
        \STATE $y^{T-c} = \mathcal{E}(x^{T-c})$
        \STATE $Y_{n+1}^T = Y_n^T + \Delta s_n v_{s_n}^{\theta^*}(Y_n^T | Y_0^T, y^{T-c}, T-c) $
        \ENDFOR
        \STATE {\bf Return:} An estimate of $x^T$,  $\hat{x}^T = \mathcal{D}(Y_{N-1}^T)$ 
    \end{algorithmic}
\end{algorithm}

\section{Empirical Results} \label{sect_exp}

In this section, we present our main empirical results to elucidate the design choice of probability paths within the flow matching framework (comparison with other generative modeling frameworks is not our focus here).
We focus on PDE dynamics forecasting tasks here (additional details and results can be found in App. \ref{app_additional}-\ref{app_exp}). 
We test the performance of our probability path model, i.e., Eq. (\ref{eq_pttopt}) with $a_t = 1-t$, $b_t = t$ and $c_t = \sqrt{\sigma_{min}^2 + \sigma^2 t(1-t) }$ on these tasks. We pick $\sigma_{min} = 0.001$, and treat $\sigma$ and $\sigma_{sam}$ as tunable parameters.
We  compare our proposed model with four other models of probability paths:

\begin{itemize}[leftmargin=*]
\vspace{-0.2cm}
\item \textbf{RIVER}~\citep{davtyan2023efficient}: RIVER uses the OT-VF model in Table \ref{tab_choices}, i.e., $a_t = 0$, $b_t = t$, $c_t = 1-(1-\epsilon_{min}) t$, choosing  $\epsilon_{min} =  10^{-7}$.
\vspace{-0.2cm}
\item \textbf{VE-diffusion} in Table \ref{tab_choices}: We use $\sigma_t = \sigma_{min} \sqrt{ \left(\frac{\sigma_{max}}{\sigma_{min}} \right)^{2t} - 1}$ with $\sigma_{min} = 0.01$, $\sigma_{max}=0.1$, and sample $t$ uniformly from $[0, 1-\epsilon]$ with $\epsilon = 10^{-5}$, following \citep{song2020score}.
\vspace{-0.2cm}
\item \textbf{VP-diffusion} in Table \ref{tab_choices}: We use $\beta(s) = \beta_{min} + s (\beta_{max} - \beta_{min})$ where $\beta_{min}=0.1, \beta_{max}=20$ and $t$ is sampled from $\mathcal{U}[0,1-\epsilon]$ with $\epsilon=10^{-5}$, following \citep{song2020score}. Thus, $T(s) = s \beta_{min} + \frac{1}{2} s^2 (\beta_{max}-\beta_{min})$.
\vspace{-0.2cm}
\item  \textbf{The stochastic interpolant (SI) path} in Table 1: We consider the path proposed by \citep{chen2024probabilistic} and use the suggested choice of $a_t = 1-t$, $b_t = t^2$ and $c_t = \epsilon (1-t) \sqrt{t}$ (see Eq. (2) in \citep{chen2024probabilistic} and note that $Var ((1-t) W_t)= (1-t)^2 t$ for the standard Wiener process $W_t$). We also consider the choice with $b_t = t$ instead.
We choose $\epsilon = 0.01$ for both choices. This is a path that is similar to ours, but with the variance $c_t^2$ chosen such that the maximum occurs at $t = 1/\sqrt{3}$ instead of at the middle of the path at $t=1/2$. We shall see that different forms of variance can lead to vastly different performance in the considered tasks.
\end{itemize}

\textbf{Evaluation metrics.} We evaluate the models using the following metrics. 
First, we use the standard mean squared error (MSE) and  the relative Frobenius norm error (RFNE) to measure the difference between predicted and true snapshots. 
Second, we compute metrics such as the peak signal-to-noise ratio (PSNR), and the structural similarity index measure (SSIM) to further quantify the quality and similarity of the generated snapshots \citep{wang2004image}. 
Third, we use the Pearson correlation coefficient to assess the correlation between predicted and true snapshots.


\textbf{Training details.} 
We use an autoencoder (AE) to embed the training data into a low-dimensional latent space, which enables the model to capture the most relevant features of the data while reducing dimensionality \citep{azencot2020forecasting}; see App. \ref{app_exp} for further discussion of the motivation.
We then train a flow matching model in this latent space. 
Training generative models in latent space has also been considered by \citep{vahdat2021score} for score matching models and by \citep{dao2023flow} for flow matching models. 
To train the AE, we minimize a loss function that  consists of reconstruction error, in terms of MSE, between the input data and its reconstructed version from the latent space. 
The choice of the AE architecture is tailored to the complexity of the dataset (see App. \ref{app_exp} for details).


\subsection{Probabilistic Forecasting of Dynamical Systems}

We evaluate the performance of our proposed probability path model on challenging dynamical systems to demonstrate its effectiveness in forecasting complex continuous dynamics. Specifically, we consider the following tasks (for details see App. \ref{app_datasets}):
    \vspace{-0.2cm}
\begin{itemize}[leftmargin=*]
    \item \textbf{Fluid Flow Past a Cylinder (FPC):} This task involves forecasting the vorticity of a fluid flowing past a cylinder. The model conditions on the first 5 frames and predicts the subsequent 20 frames at a resolution of $64\times 64$ with 1 channel representing vorticity.
    \vspace{-0.15cm}
    \item \textbf{Shallow-Water Equation (SWE):} This dataset models the dynamics of shallow-water equations~\citep{takamoto2022pdebench}, capturing essential aspects of geophysical fluid flows. We use the first 5 frames for conditioning and predict the next 15 frames at a resolution of $128\times 128$ with 1 channel representing horizontal flow velocity.
    \vspace{-0.15cm}
    \item \textbf{Diffusion-Reaction Equation (DRE):} This dataset models the dynamics of a 2D diffusion-reaction equation~\citep{takamoto2022pdebench}.  
    We use the first 5 frames for conditioning and predict 15 future frames at a 128 $\times$ 128 with 2 channels representing velocity in the $x$ and $y$ directions.
    \vspace{-0.15cm}
    \item \textbf{Incompressible Navier-Stokes Equation (NSE):} As a more challenging benchmark, we consider forecasting the dynamics of a 2D incompressible Navier-Stokes equation~\citep{takamoto2022pdebench}. 
    We use the first 5 frames for conditioning and predict the next 20 frames at a resolution of $512\times 512$ with 2 channels representing velocity in the $x$ and $y$ directions.
\end{itemize}
    \vspace{-0.1cm}


Table \ref{tab:result_fluid} summarizes the performance of our model compared to other probability path models across all tasks. It can be seen that our probability path model achieves the lowest test MSE and RFNE across all tasks, indicating more accurate forecasts. Moreover, the higher PSNR and SSIM scores indicate that our model better preserves spatial structures in the predictions.
Despite the similarity of our proposed model with the stochastic interpolant of \citep{chen2024probabilistic}, in that both models use consecutive samples to define the path, our model outperforms the stochastic interpolant model, suggesting that choosing the maximum variance to occur at the middle of the path is a better choice.  
Importantly, our model is highly efficient during inference time since it requires very few sampling steps; this is significantly lower compared to other models.

Figure \ref{fig:result_fluid_correlation} shows the Pearson correlation coefficients of the predicted snapshots over time for all models. Our model's predictions shows a slower decay of correlation coefficients compared to other models, indicating better temporal consistency and long-term predictive capability. Correlation coefficients about $95\%$ indicate performance on par with physics-based numerical simulators.

\begin{table}[!h] 
	\caption{Results for forecasting  dynamical systems using different probability path models for flow matching. Results are averaged over 5 generations obtained with 9 sampling steps $(N=10)$ using RK4. For our model, we use $\sigma_{min} = 0.001$ and $\sigma_{sam} = 0$ for all the considered tasks.}
 \label{tab:result_fluid}
	\centering
		\begin{tabular}{l l  c cccc}
			\toprule
			Task & Model &  Test MSE ($\downarrow$) & Test RFNE ($\downarrow$) & PSNR ($\uparrow$) & SSIM ($\uparrow$) \\	
			\midrule 
			 \multirow{5}{*}{Flow past Cylinder} & RIVER & 1.86e-03 & 4.48e-02  & 44.30 & 0.99  \\
			   &   VE-diffusion  & 2.29e-01 & 4.74e-01 & 27.36  & 0.55  \\
			& VP-diffusion & 3.58e-03 & 6.09e-02 & 42.37  & 0.98  \\
   			& SI ($b_t = t^2$)  & 3.40e-03 &  6.10e-02 & 41.81 & 0.98 \\
            	& SI ($b_t = t$)  & 1.19e-02 & 9.96e-02 & 39.64 & 0.99 \\
			\rowcolor{Wheat1} &   \textbf{Ours} $(\sigma = 0.01)$ & \textbf{3.79e-04} & \textbf{2.30e-02} & \textbf{48.88} & \textbf{1.00}  \\
             \midrule
			\multirow{5}{*}{Shallow-Water}&  RIVER & 9.18e-04 & 1.49e-01 & 34.92 & 0.92 \\
			&  VE-diffusion  & 1.32e-02 & 5.66e-01 & 28.10 & 0.55   \\
			&  VP-diffusion & 1.39e-03 & 1.81e-01 & 34.07 & 0.87 &  \\
   			&  SI ($b_t = t^2$)    & 1.05e-03 & 1.53e-01 & 35.59 & 0.89  \\
            &  SI ($b_t = t$)    & 6.74e-04 & 1.29e-01 & 36.08 & 0.93 \\
            \rowcolor{Wheat1}  &  \textbf{Ours} $(\sigma = 0.1)$ & \textbf{6.60e-04} & \textbf{1.28e-01}  &\textbf{36.10} & \textbf{0.93} \\
              \midrule
			\multirow{5}{*}{Diffusion-Reaction} &  RIVER & 2.87e-03 & 2.28e-01 & 38.12  & 0.82 \\
			&  VE-diffusion  & 1.04e-01 & 1.71 & 32.98 & 0.36  \\
			&  VP-diffusion &  2.02e-02 & 7.03e-01 & 34.98 & 0.52  \\
   			&  SI ($b_t = t^2)$  & 6.17e-02 & 8.62e-01 & \textbf{45.68} &  0.76 \\
            &  SI ($b_t = t)$  & 3.72e-04 & 1.18e-01  & 34.24  & 0.89 \\
			\rowcolor{Wheat1}  &  \textbf{Ours} $(\sigma = 0)$   & \textbf{3.56e-04}  & \textbf{1.16e-01} & 34.34  & \textbf{0.90} \\
              \midrule
			\multirow{5}{*}{Navier-Stokes} & RIVER          & 2.84e-02 & 8.67e-01 & \textbf{30.75}  & 0.63 \\
			& VE-diffusion  & 1.58e-01 & 2.31 & 26.90 & 0.33  \\
			& VP-diffusion &  2.09e-01 & 2.48 & 27.96 & 0.30  \\
   			&  SI ($b_t = t^2)$  & 1.27e-03 & 2.66e-01 & 30.73 &  0.90 \\
            &  SI ($b_t = t)$  & 1.13e-03 & 2.54e-01 & 30.66 & 0.93  \\
            \rowcolor{Wheat1} &   \textbf{Ours} $(\sigma = 0.1)$ & \textbf{1.13e-03}  & \textbf{2.53e-01} & 30.66  & \textbf{0.93} \\
			\bottomrule
	\end{tabular}
\end{table}

\begin{figure}[!h] 
\centering
\vspace{0.5cm}

\begin{overpic}[width=0.48\textwidth]{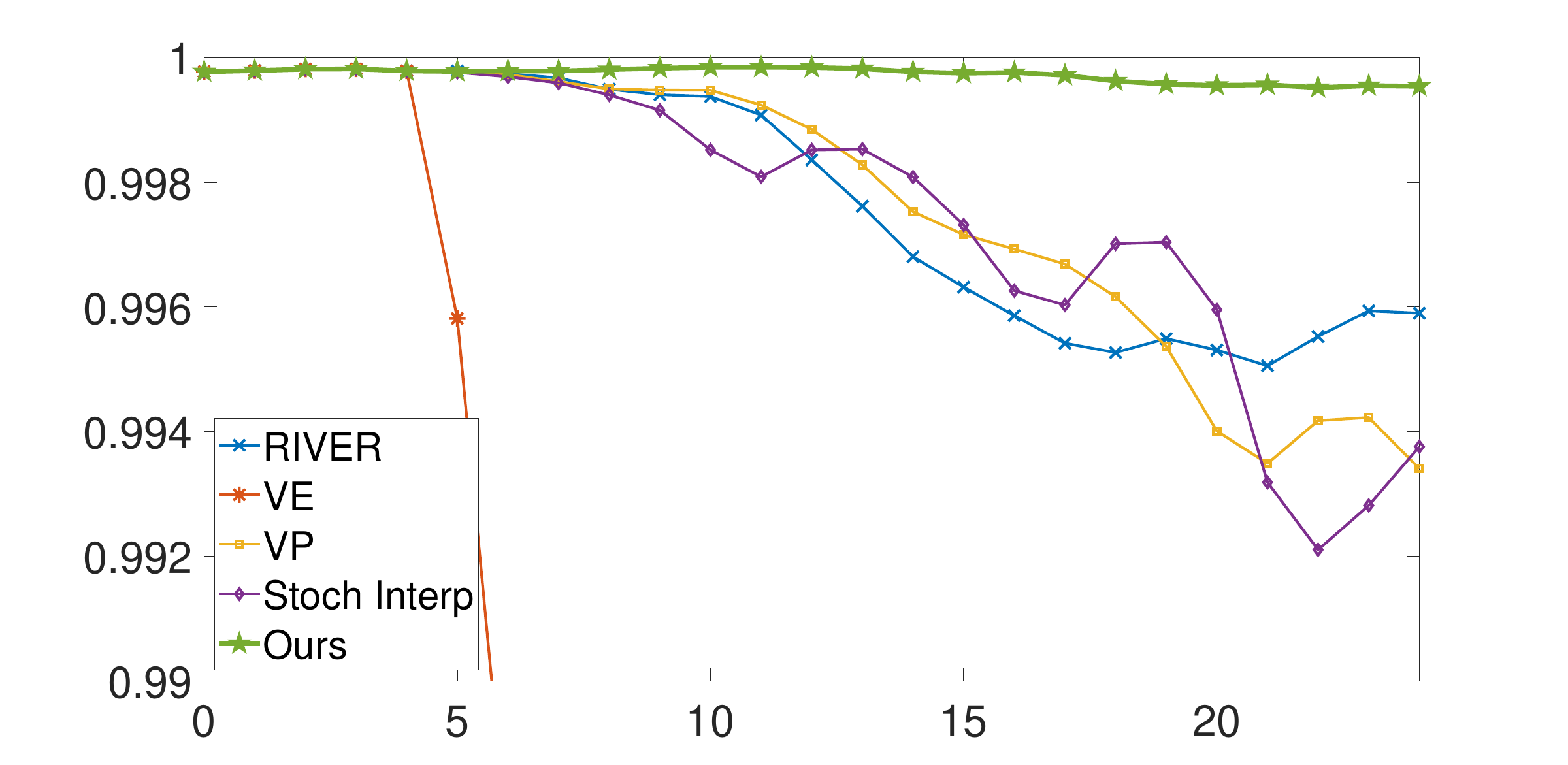}
    \put(-0,15){\rotatebox{90}{\footnotesize correlation}}
    \put(30,48){\textbf{Simple Fluid Flow}}
\end{overpic}
\begin{overpic}[width=0.48\textwidth]{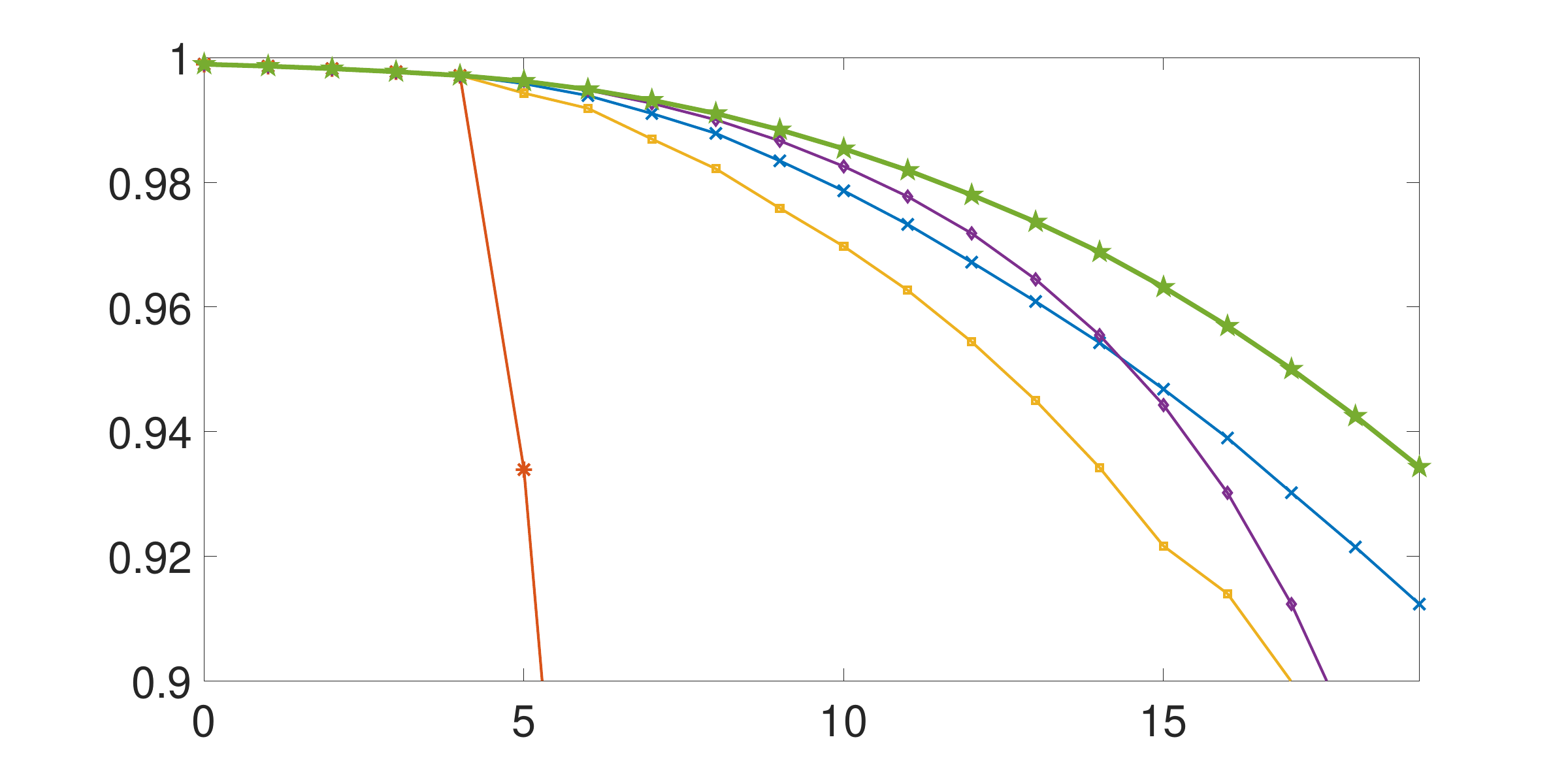}
    \put(24,48){\textbf{Shallow-Water Equation}}
\end{overpic}

\vspace{0.3cm}

\begin{overpic}[width=0.48\textwidth]{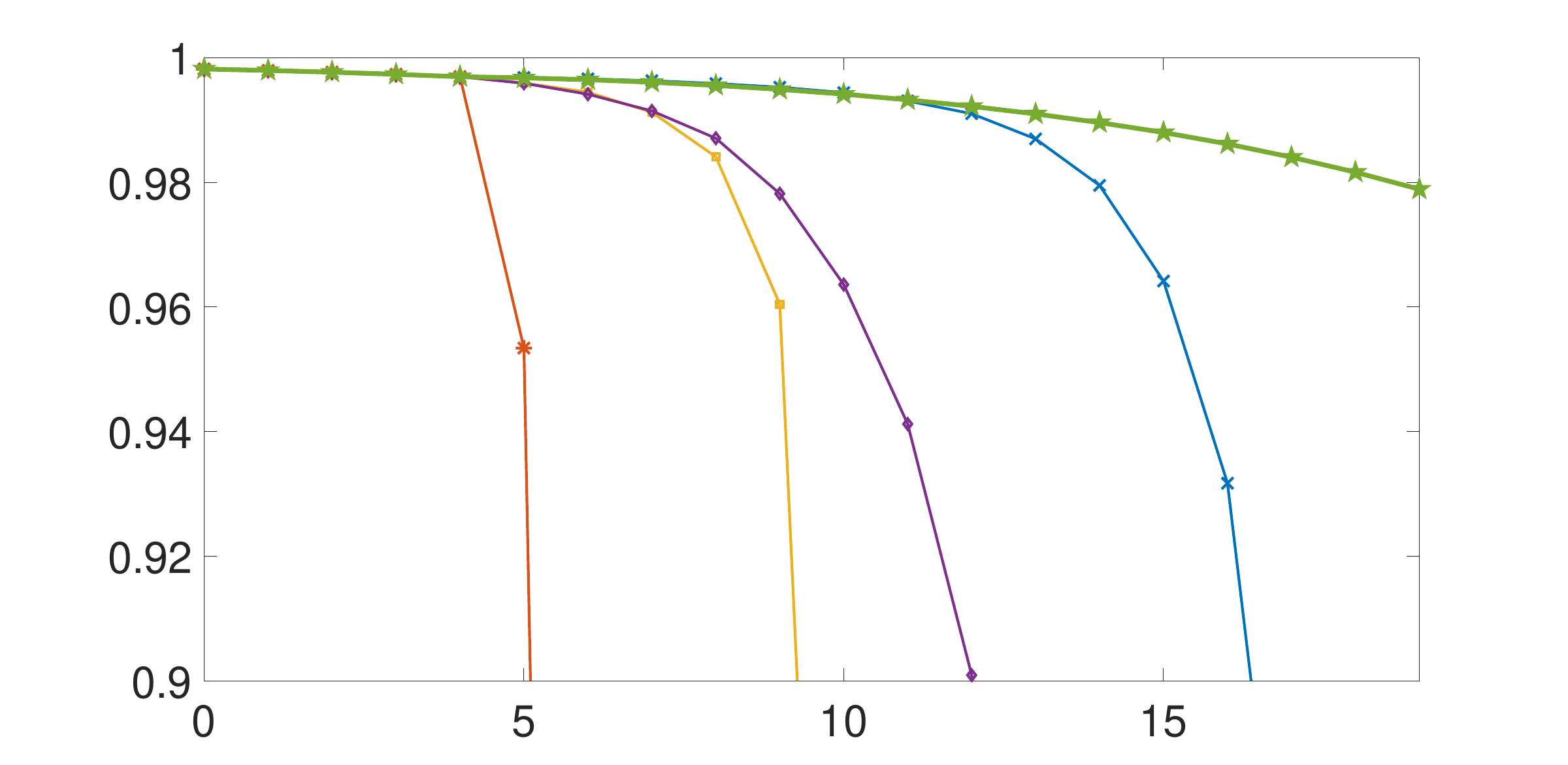}
    \put(35,-2){\footnotesize Prediction Steps}
    \put(-0,15){\rotatebox{90}{\footnotesize correlation}}
    \put(19,48){\textbf{Reaction-Diffusion Equation}}
\end{overpic}
\begin{overpic}[width=0.48\textwidth]{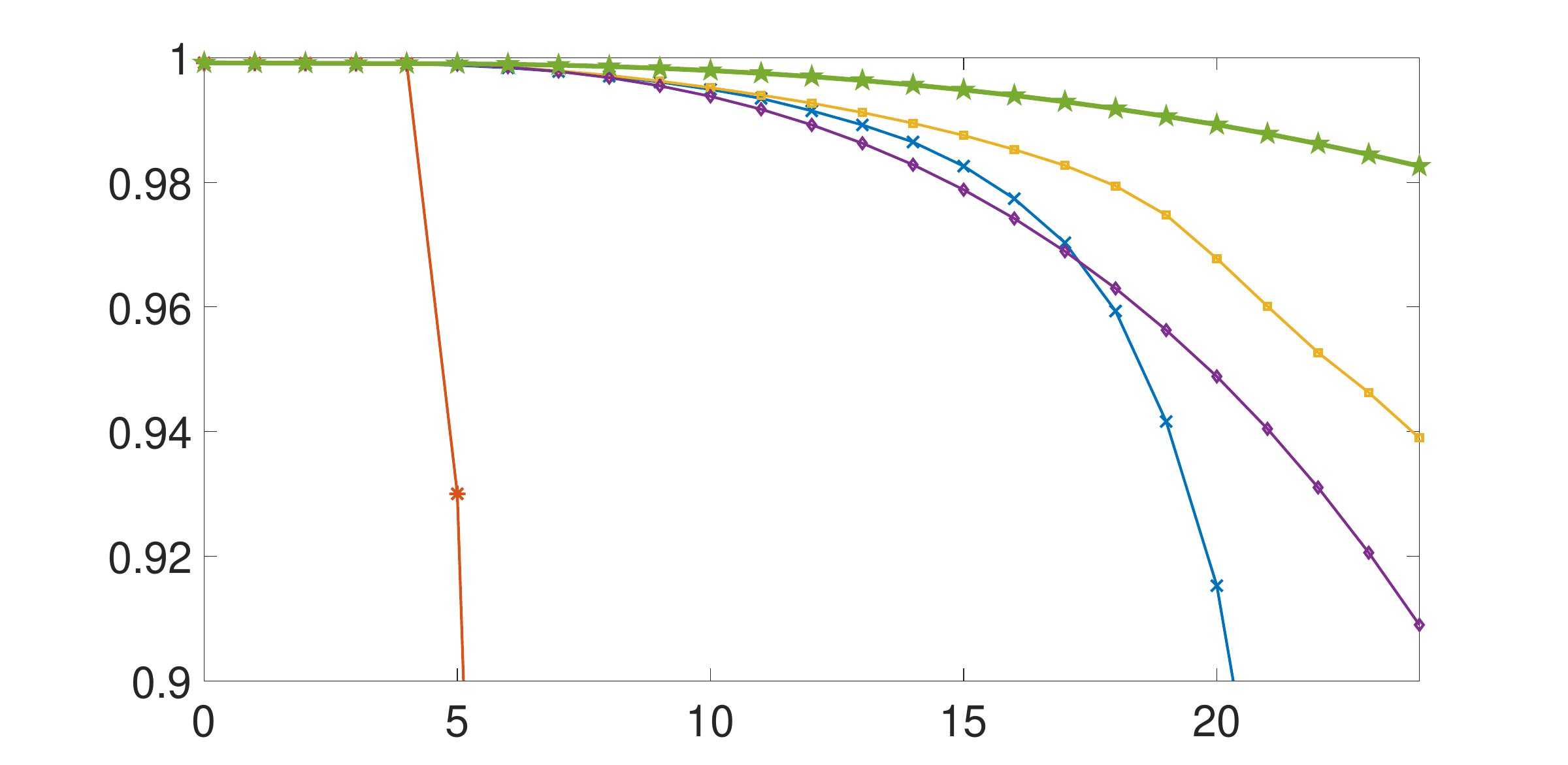}
    \put(35,-2){\footnotesize Prediction Steps}
    \put(25,48){\textbf{Navier-Stokes Equation}}
\end{overpic}

\caption{Pearson correlation coefficient to assess the correlation between predicted and true snapshots at various prediction steps for different probability path models. Our probability path model shows the best performance on all three tasks. Note that the first 5 snapshots are the conditioning snapshots.}
\label{fig:result_fluid_correlation} 
\vspace{0.1cm}
\end{figure}

Figure \ref{fig:convergence}  compares the training loss curves of our model with others for the FPC and the SWE task. Our method leads to faster convergence during training and smoother loss curves.
This suggests that our model requires fewer iterations to generate high-quality samples when compared to other  models. We find that there is no significant difference in training time between the models when trained for the same number of epochs.

\begin{figure}[!h] 
\vspace{0.5cm}
\centering
\begin{minipage}{0.46\textwidth}
    \begin{overpic}[width = 1.1\textwidth]{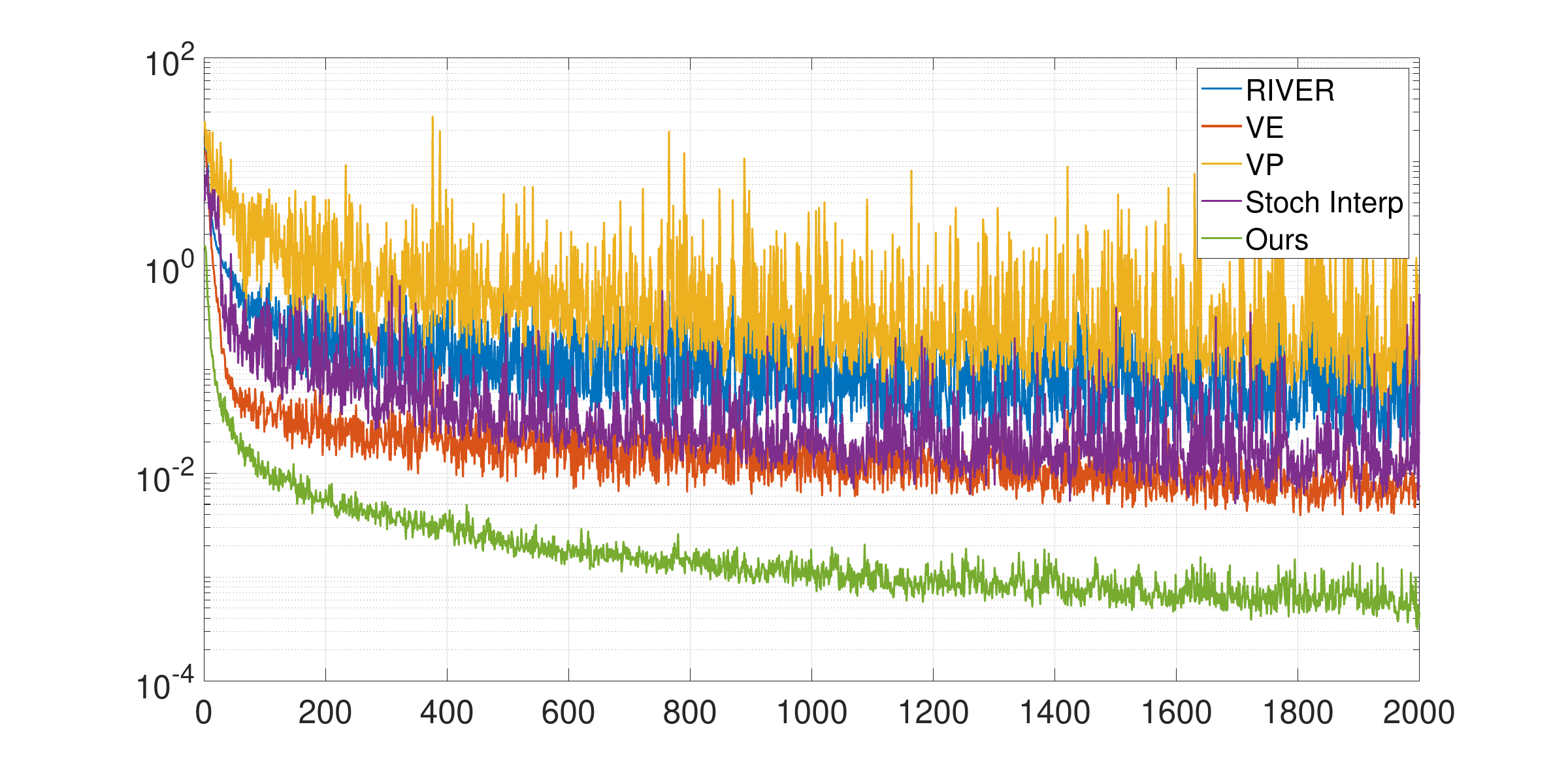}
    \put(3,21){\rotatebox{90}{Loss}}
    \put(47,-3){Epoch}
    \put(33,48){\textbf{Simple Fluid Flow}}
    \end{overpic}
\end{minipage}
\hspace{0.3cm}
\begin{minipage}{0.46\textwidth}
    \begin{overpic}[width = 1.1\textwidth]{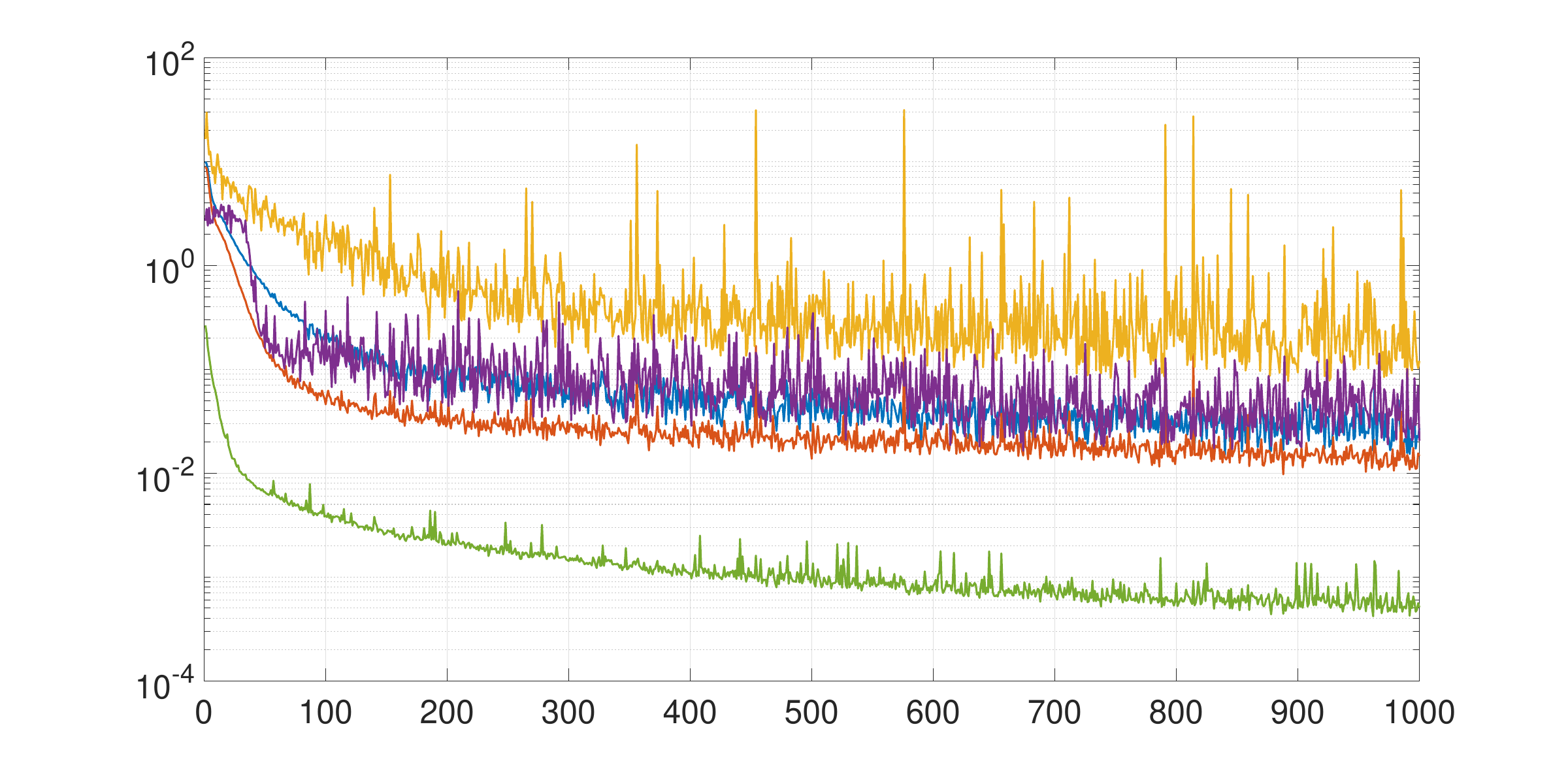}
    \put(3,21){\rotatebox{90}{Loss}}
    \put(47,-3){Epoch}
    \put(27,48){\textbf{Shallow-Water Equation}}
    \end{overpic}
\end{minipage}
\vspace{+0.25cm}
\caption{Training loss for different models of probability path for the fluid flow past a cylinder task. Our model leads to fastest convergence and smoothest loss curve among all models.
}
\vspace{+0.3cm}
\label{fig:convergence}
\end{figure}

\subsection{Discussion}

Our empirical results showed that our proposed model consistently outperformed other models across several forecasting tasks involving different PDEs. Our model shows improved training efficiency, with faster convergence reducing the computational resources and time required for model training. Moreover, our model is efficient during inference time since it only requires a few sampling steps, making it practical for real-world applications where computational efficiency is crucial. Additionally, the model maintained better temporal consistency, as indicated by a slower decay of Pearson correlation coefficients over longer prediction horizons. These findings validate the effectiveness of our approach in modeling complex dynamical systems. Our focus in this paper is on near- to mid-term forecasting (15–20 steps), 
which is a standard set-up in spatio-temporal video and PDE benchmarks. While our framework is compatible with longer rollouts in principle, scaling to significantly longer horizons remains a long-standing challenge, as for most forecasting tasks, due to compounding errors and distributional drift, especially in high-dimensional dynamics. This is an important direction for future work.

Our ablation study (see App. \ref{app_ablation}) further validated the advantages of our proposed probability path model. We found that larger $\sigma$ values not only contributed to smoother training loss curves but also enhanced the overall stability and efficiency of the model. Moreover, we saw that $\sigma_{\text{sam}} = 0$ can be fixed without compromising accuracy. We also provide studies to understand the effect of using context frames and the choice of $\sigma_{min}$ on test performance in App. \ref{app_ablation}. 
In particular, in Table \ref{tab:sigma_min_analysis} we see noticeable improvement in test performance using $\sigma_{min} > 0$, showing the advantages of going beyond the commonly used Brownian bridge model (see \citep{tong2023improving, tong2023simulation}).  
Importantly, the fact that our model achieved improved performance even with the simplest sampler (Euler scheme) and a minimal number of sampling steps (as few as one for the FPC task; see Figure \ref{fig:mse_samplingsteps}) validates its practical applicability, especially in scenarios where computational resources and time are limited. 

Lastly, we provide an expanded discussion to position the proposed probability path relative to the stochastic interpolant (SI) path. While the SI path offers simplicity and analytical tractability, it lacks principled guidance for choosing  $b_t$, which is an important design choice. Our proposed probability path is grounded in the dynamical SB framework, which,  in our formulation, models the most likely stochastic evolution between two Gaussian  distributions centered around time-adjacent samples (with a minimal variance  $\sigma_{min}^2$) under prior dynamics.  From this perspective, the path is not simply a heuristic interpolation and the choice of $b_t = t$ is justified, whereas the inclusion of a non-zero $\sigma_{min}^2$ is important in the context of probabilistic forecasting. Note that when $\sigma_{min} = 0$, our proposed model simplifies to the SI model with $b_t = t$. A key implication of this viewpoint is that the variance structure of the path plays a crucial role: it reflects the uncertainty in matching the two endpoints under the prior dynamics. Our proposed path introduces a variance profile that is symmetric and peaks at the midpoint $t = 1/2$, better reflecting the intrinsic uncertainty in interpolating between endpoints. In contrast, the preferred SI path suggested in \citep{chen2024probabilistic} (with $b_t = t^2$) has an asymmetric variance peak at $t = 1/\sqrt{3}$, which may misalign with the data geometry. Empirically, we see that our probability path model mostly outperforms the SI models  across the considered tasks (see Table \ref{tab:result_fluid}). As expected, when $\sigma_{min}$ is very close to zero, the test performance of our model is only marginally better than that of the SI model with $b_t = t$ for most tasks. As discussed earlier (see also App. \ref{app_ablation}), using bigger values of $\sigma_{min} > 0$ can not only improve test performance but also help improve training stability. 

\section{Conclusion}

In this work, we investigated the use of flow matching in latent space for probabilistic forecasting of spatio-temporal dynamics, providing a theoretical framework and efficient algorithms. We demystified the critical role of the probability path design in this setting and proposed an improved probability path model. Our model is theoretically motivated via the SB and dynamical optimal transport perspective. It leverages the inherent continuity and correlation in the spatio-temporal data, leading to more stable training and faster convergence. Our empirical evaluations on several PDE forecasting tasks demonstrated that our model performs favorably when compared to existing models. 
While we focus on the flow matching approach, we leave comprehensive comparisons with more computationally demanding approaches, such as score matching \citep{song2020score} and bridge matching \citep{de2021diffusion}, to future work.

Our findings position flow matching as a promising alternative to diffusion-based generative models in PDE forecasting. While diffusion models have shown strong performance, they are typically not simulation-free and require many iterative sampling steps. They rely on simulating reverse SDEs or ODEs, often using 25-100+ steps for generation. Therefore, they can be computationally costly and difficult to tune for high-dimensional spatio-temporal systems. In contrast, flow matching enables training wihout simulating stochastic processes,   by directly learning a continuous vector field aligned with a designed probability path and few-step sampling, offering improved inference efficiency. Our results show that careful design of the probability path, especially those tailored to continuous-time dynamics, can significantly improve training stability and forecast performance. These insights suggest that flow-based methods not only offer theoretical elegance via connections to optimal transport and Schrödinger bridges, but also hold practical advantages for scalable and controllable forecasting in complex dynamical systems.

\subsubsection*{Acknowledgments}
The computations were enabled by resources provided by the National Academic Infrastructure for Supercomputing in Sweden (NAISS), partially funded by the Swedish Research Council
through grant agreement no. 2022-06725 (NAISS 2024/5-269).

SHL would like to acknowledge support from the Wallenberg Initiative on Networks and Quantum Information (WINQ) and the Swedish Research Council (VR/2021-03648). 
AY was supported by
the SciAI Center, funded by the Office of Naval Research under Grant Number N00014-23-1-2729.
EH was supported by the U.S. Department of Energy, Office of Science, Office of Advanced Scientific Computing Research, Department of
Energy Computational Science Graduate Fellowship under Award Number DE-SC0024386.
NBE would like to acknowledge LBL’s LDRD initiative for providing partial support.
XSL was supported in part by the U.S. Department of Energy, Office of Science, Office of Advanced Scientific Computing Research's Applied Mathematics SciML program under Contract No. DE-AC02-05CH11231 at Lawrence Berkeley National Laboratory.

\bibliography{reference}
\bibliographystyle{tmlr}

\newpage 

\appendix
\section*{Appendix}

This appendix is organized as follows. 
In App. \ref{app_relatedwork}, we provide a detailed discussion of related work and connect them to our studies.
In App. \ref{app_connect}, we provide some remarks on the connection of flow matching models to other generative models. 
In App. \ref{app_lossparam}, we provide commonly used loss parametrizations and compare them with our flow matching loss. 
In App. \ref{app_proof}, we provide theorems and proofs to justify the discussions in Section \ref{sect_framework}-\ref{sect_model}  (see Theorem \ref{thm_1}-\ref{thm_gaussianpaths} and Theorem \ref{thm_optimal_path}-\ref{thm_variance}), as well as additional discussions. 
In App. \ref{app_additional}, we provide additional empirical results. 
In App. \ref{app_exp}, we provide the missing experimental details.

\section{Related Work} \label{app_relatedwork}
\noindent {\bf Modeling and forecasting dynamical systems.} Traditionally dynamical systems arising in scientific applications have been studied through mathematical models based on physical laws, such as ODEs and PDEs \citep{evans2022partial}. These methods often require significant domain knowledge and strong assumptions, limiting their flexibility in data-driven scenarios where governing equations are unknown. Statistical and machine learning (ML) methods have emerged as powerful alternatives to study these systems. Classical approaches include  autoregressive integrated moving average (ARMA) and state-space models such as Kalman filters \citep{box2015time}, which have long been used for time series forecasting but struggle with highly nonlinear dynamics. ML methods such as recurrent neural networks (RNNs) \citep{bengio2013advances} and variants for spatio-temporal data~\citep{shi2015convolutional,lyu2024wavecastnet} are capable of learning complex temporal dependencies but they are challenging to train. Neural ODEs \citep{chen2018neural,queiruga2021stateful} offer a structured approach to continuous-time modeling by integrating neural networks with ODE solvers. However, these models still face challenges with generalizing to new systems, and learning continuous physical dynamics \citep{krishnapriyan2023learning}. 

\noindent {\bf Generative modeling for time series forecasting.} Generative models, particularly score-based diffusion models and flow-based approaches, have gained significant attention for learning complex data distributions in high-dimensional spaces such as images \citep{song2020score, karras2022, esser2024scaling}, videos \citep{davtyan2023efficient, shrivastava2024video}, and dynamical systems \citep{pathak2024kilometer, kohl2024benchmarking, ren2024learning}. In the context of time series forecasting, diffusion models like TimeGrad \citep{rasul2021autoregressive, meijer2024rise} incorporate conditioning on previous time steps into both the forecasting process and the loss function. Building upon TimeGrad, CSDI \citep{tashiro2021csdiconditionalscorebaseddiffusion} enhances performance in imputation tasks by replacing the RNN encoder with a transformer. ScoreGrad \citep{yan2021scoregradmultivariateprobabilistictime} adapts this framework to a score-based diffusion model for multivariate probabilistic forecasting. Methods such as DSPD and CSPD \citep{bilos2023modelingtemporaldatacontinuous}, based on DDPM and SDE respectively, model dynamics as continuous functions and introduce time-correlated noise functions. Another relevant work, SSSD$^\text{S4}$ \citep{alcaraz2023diffusionbasedtimeseriesimputation}, uses state-space models (S4) to encode time series and performs diffusion over the temporal domain instead of across multivariate components. While these models demonstrate strong performance, they often require many sampling steps during generation. Related to our approach, \citep{chen2024probabilistic} proposes an SDE framework utilizing stochastic interpolants \citep{albergo2023stochastic} for probabilistic forecasting, but their sampler also requires many steps during generation. Another recent work is \citep{tamir2024conditional}, which introduces a flow matching framework for time series modeling within the data space; however, it concentrates on small ODE datasets and does not address forecasting tasks. Meanwhile, \citep{hu2024fm} proposes a model based on rectified flow with ODE-based straight line transport for efficient time series generation.

Applications of diffusion models to spatio-temporal data have primarily focused on video generation \citep{singer2023makeavideo}. For spatio-temporal forecasting on sensor grids, graph neural networks (GNNs) are widely employed in tasks such as traffic prediction \citep{qu2023urbanflowsgraph,Lv2024trajectorylearninggraphs, wen2024diffstgprobabilisticspatiotemporalgraph} and air quality forecasting \citep{wen2024diffstgprobabilisticspatiotemporalgraph}. Another recent work is DYffusion \citep{cachay2023dyffusiondynamicsinformeddiffusionmodel}, which utilizes a non-Markovian diffusion process to propagate images temporally, similar to DDIM. This method iteratively refines initial predictions at larger time steps by conditioning them on predictions at smaller time steps.

\section{Connection to SDE Based Generative Model Frameworks}\label{app_connect}
In this section, we provide some remarks on the connection between flow matching models and SDE based generative models \citep{song2020score}. 

Consider the following continuous-time Gaussian latent variable model \citep{tomczak2021latent}: 
$Z_t = \mathcal{E}(X_t)$,
\begin{align}
Z_t &= A_t Z_0 + B_t Z_1 + L_t \epsilon, \ \ t \in [0,1], \label{eq_latent} 
\\
X_t &= \mathcal{D}(Z_t),
\end{align}
where $t$ is the continuous variable, $X_0 \in \RR^d$ represent data samples, $Z \in \RR^{p}$ is the latent variable,  $\epsilon \sim \mathcal{N}(0, I)$ is independent of the random variables $X_0, Z_0, Z_1$. Here $A_t, B_t$ and $L_t \geq 0$ are pre-specified coefficients which are possibly matrix-valued and time-dependent, $\mathcal{D}$ and $\mathcal{E}$ denote the decoder and encoder map respectively, and  $\mathcal{D} \circ \mathcal{E} = I$. Note that $Z_0$
and $Z_1$ are initial and terminal point of the path $(Z_t)_{t 
\in [0,1]}$ in the latent space.

The above latent variable model can be identified (up to equivalence in law for each $t$) with the linear SDE of the form:
\begin{equation}
    d\hat{Z}_t = F_t \hat{Z}_t dt + H_t Z_1 dt + G_t dW_t,  \ \ \hat{Z}_0 = Z_0,  \ \ t \in [0,1), \label{eq_sde}
\end{equation}
where $(W_t)_{t \in [0,1]}$ is the standard Wiener process. 
By matching the moments, we obtain 
\begin{align}
    F_t &= \dot{A}_t A_t^{-1}, \label{app_id1} \\
    H_t &= \dot{B}_t - \dot{A}_t A_t^{-1} B_t, \\
    G_t G_t^{T} &= \dot{L}_t L_t^{T} + L_t \dot{L}_t^T - \dot{A}_t A_t^{-1} L_t L_t^T - L_t L_t^T A_t^{-T} (\dot{A}_t)^T,  \label{app_id2}
\end{align}
where the overdot denotes derivative with respect to $t$ and $A^T$ denotes the transpose of $A$.

Under the above formulation, various existing generative models such as DDPM \citep{ho2020denoising}, VP-SDE and VE-SDE of 
\citep{song2020score, song2020denoising}, the critically damped SDE of  \citep{dockhorn2021score}, the flow matching models in 
\citep{lipman2022flow, tong2023improving, liu2022flow} and the stochastic interpolants of \citep{albergo2023stochastic} can be recovered, and new models can be derived. 

The following proposition establishes 
the connection between flow matching using our proposed probability path model, the Gaussian latent variable model (\ref{eq_latent}) and the linear SDE model (\ref{eq_sde}).

\begin{proposition} \label{prop_id}
For every $t 
\in [0,1]$, the  $Z_t$ defined in Eq. (\ref{eq_pttopt}) can be identified, up to equivalence in law, with the $Z_t$ generated by the latent variable model (\ref{eq_latent}) with $A_t = (1-t) I$, $B_t = t I$, $L_t = \sqrt{\sigma_{min}^2 + \sigma^2 t(1-t)} I$. For $t \in [0,1)$, it can also be identified with the solution $
\hat{Z}_t$ of the linear SDE (\ref{eq_sde}) with $F_t = -I/(1-t)$, $H_t = (1 + \frac{t}{1-t})I$ and $G_t = \sqrt{\sigma^2 +\frac{2  \sigma_{min}^2 }{1-t}} I$. Moreover, $\lim_{t \to 1} \hat{Z}_t =^d Z_1 + \sigma_{min} 
\epsilon$, where $\epsilon \sim \mathcal{N}(0,I)$ and $=^d$ denotes equivalence in distribution. 
\end{proposition}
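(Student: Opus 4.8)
The plan is to prove the three assertions in sequence, each by a direct matching-of-moments computation, leveraging the general identification formulas \eqref{app_id1}--\eqref{app_id2} already established in this appendix. First I would verify the identification with the latent variable model \eqref{eq_latent}: substituting $A_t = (1-t)I$, $B_t = tI$, $L_t = \sqrt{\sigma_{min}^2 + \sigma^2 t(1-t)}\,I$ into $Z_t = A_t Z_0 + B_t Z_1 + L_t\epsilon$ with $\epsilon\sim\mathcal{N}(0,I)$ yields a Gaussian with mean $(1-t)Z_0 + tZ_1$ and covariance $(\sigma_{min}^2 + \sigma^2 t(1-t))I$, which is exactly the conditional law $p_t(Z\mid\tilde Z)$ in \eqref{eq_pttopt}; this step is essentially a definition-check, so it is immediate.

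Next I would derive the SDE coefficients by plugging the same $A_t, B_t, L_t$ into \eqref{app_id1}--\eqref{app_id2}. For $F_t$: $\dot A_t = -I$ and $A_t^{-1} = (1-t)^{-1}I$, giving $F_t = -I/(1-t)$. For $H_t$: $\dot B_t = I$ and $\dot A_t A_t^{-1} B_t = (-I)(1-t)^{-1}(tI) = -t/(1-t)\,I$, so $H_t = I + \frac{t}{1-t}I$. For $G_t G_t^T$: since all matrices are scalar multiples of $I$ they commute, so with $L_t^2 = \sigma_{min}^2 + \sigma^2 t(1-t)$ and $\frac{d}{dt}L_t^2 = \sigma^2(1-2t)$, I would compute $2\dot L_t L_t - 2\dot A_t A_t^{-1}L_t^2 = \sigma^2(1-2t)I + \frac{2}{1-t}(\sigma_{min}^2 + \sigma^2 t(1-t))I$; simplifying the bracket gives $\sigma^2(1-2t) + 2\sigma^2 t + \frac{2\sigma_{min}^2}{1-t} = \sigma^2 + \frac{2\sigma_{min}^2}{1-t}$, hence $G_t = \sqrt{\sigma^2 + \frac{2\sigma_{min}^2}{1-t}}\,I$ for $t\in[0,1)$. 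I would note in passing that $F_t$ and $G_t$ blow up as $t\to 1$, which is why the SDE is only defined on $[0,1)$ and the terminal behavior must be handled as a limit.

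For the final claim, $\lim_{t\to1}\hat Z_t =^d Z_1 + \sigma_{min}\epsilon$, the cleanest route is to use the already-established equivalence in law $\hat Z_t =^d Z_t$ for each $t\in[0,1)$, where $Z_t$ is the latent variable expression: then $\lim_{t\to1}Z_t = \lim_{t\to1}\big((1-t)Z_0 + tZ_1 + \sqrt{\sigma_{min}^2 + \sigma^2 t(1-t)}\,\epsilon\big) = Z_1 + \sigma_{min}\epsilon$ almost surely, and convergence a.s. (or in $L^2$) implies convergence in distribution, which transfers to $\hat Z_t$. The main obstacle — really the only subtle point — is justifying that the distributional limit of $\hat Z_t$ exists and equals the distributional limit of $Z_t$: the per-$t$ equality in law does not automatically pass to the limit unless one argues via convergence of the Gaussian parameters (means $\to Z_1$ pathwise, variances $\to\sigma_{min}^2$), so I would phrase the conclusion in terms of convergence of the associated Gaussian laws (Lévy continuity / convergence of characteristic functions) rather than pathwise convergence of the SDE solution itself, since $(\hat Z_t)$ and $(Z_t)$ need not be coupled on the same probability space beyond equality of marginals.
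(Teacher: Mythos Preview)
Your proof is correct. For the first two claims (identification with the latent variable model and with the linear SDE) you do exactly what the paper does --- apply \eqref{app_id1}--\eqref{app_id2} --- only you spell out the algebra that the paper leaves implicit.

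For the terminal-limit claim your route is genuinely different and more elementary than the paper's. The paper works directly with the SDE solution $\hat Z_t$: it writes out the variation-of-constants formula $\hat Z_t = \Phi(t,0)Z_0 + \int_0^t \Phi(t,s)H_sZ_1\,ds + \int_0^t \Phi(t,s)G_s\,dW_s$ with fundamental solution $\Phi(t,s)=\frac{1-t}{1-s}I$, then lets $t\to 1$ term by term, using the It\^o isometry to compute the limiting variance of the stochastic integral. You instead leverage the per-$t$ equality in law $\hat Z_t =^d Z_t$ already established and take the (trivial) limit on the latent-variable side. Your way avoids any stochastic calculus; the paper's way gives a more explicit decomposition of $\hat Z_t$ itself but requires the It\^o machinery. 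One minor remark: the ``subtlety'' you flag is less delicate than you suggest --- since convergence in distribution is a statement purely about the sequence of laws, and the laws of $\hat Z_t$ and $Z_t$ agree for every $t<1$, convergence of one family immediately gives convergence of the other to the same limit; no extra argument via characteristic functions is needed.
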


\begin{proof}
The identification follows from matching the moments of $Z_t$ and $\hat{Z}_t$, i.e., applying Eq. (\ref{app_id1})-(\ref{app_id2}). 

To prove that $\lim_{t \to 1} \hat{Z}_t =^d Z_1 + \sigma_{min} \epsilon$, 
we use the explicit solution of the SDE:
$$\hat{Z}_t = \Phi(t,0)Z_0 + \int_0^t \Phi(t,s)H_s Z_1 ds + \int_0^t \Phi(t,s)G_s dW_s,$$
where $\Phi(t,s)$ is the fundamental solution of the homogeneous equation $d\Phi(t,s) = F_t \Phi(t,s)dt$ with $\Phi(s,s) = I$.
For our $F_t = -I/(1-t)$, we have $\Phi(t,s) = \exp(-\int_s^t \frac{1}{1-u}du)I = (\frac{1-t}{1-s})I$.
Substituting this and the formula for $H_t$ into the solution, we obtain
$\hat{Z}_t = (1-t)Z_0 + \int_0^t (\frac{1-t}{1-s})(1 + \frac{s}{1-s})Z_1 ds + \int_0^t (\frac{1-t}{1-s})\sqrt{\sigma^2 + \frac{2\sigma_{min}^2}{1-s}}I dW_s$.

Now, let us examine each term as $t \to 1$.
First, 
$(1-t)Z_0 \to 0$ as $t \to 1$ and
$\int_0^t (\frac{1-t}{1-s})(1 + \frac{s}{1-s})Z_1 ds = tZ_1 \to Z_1$ as $t \to 1$.
It remains to deal with the stochastic integral term
$M_t := \int_0^t (\frac{1-t}{1-s})\sqrt{\sigma^2 + \frac{2\sigma_{min}^2}{1-s}} dW_s$. Note that $M_t$ is an It\^o integral that has zero mean, i.e. $\mathbb{E} M_t = 0$, and using It\^o's formula \citep{sarkka2019applied},
\begin{align}
\mathbb{E} M_t^2 &= \int_0^t ((1-t)/(1-s))^2 \left(\sigma^2 + \frac{2\sigma_{min}^2}{1-s} \right) ds \\
&= (1-t)^2 \left[\sigma^2 \int_0^t \frac{1}{(1-s)^2} ds + 2\sigma_{min}^2 \int_0^t \frac{1}{(1-s)^3} ds \right] \\
&= (1-t)^2 \left[\sigma^2 \left(\frac{t}{1-t} \right) + \sigma_{min}^2 \left(\frac{1}{(1-t)^2} - 1 \right) \right] \\
&= (1-t)t\sigma^2 + \sigma_{min}^2 t(2-t), 
\end{align}
which tends to $
\sigma_{min}^2$ as $t \to 1$. Combining the above results, $\lim_{t \to 1} \hat{Z}_t \sim \mathcal{N}(Z_1, \sigma_{min}^2 I)$.

\end{proof}


\section{On Different Loss Parametrizations} \label{app_lossparam}

In this section, we list popular choices of loss parametrization considered in the literature and connect them to our flow matching loss. We refer to \citep{kingma2023understanding} for a more comprehensive discussion. Recall that the Gaussian path that we consider is: $Z_t = a_t Z_0 + b_t Z_1 + c_t \xi$, where $\xi \sim \mathcal{N}(0,I)$.  In general, these loss parametrizations take the form of:
\begin{equation} \label{conditional_opt_app}
    \mathcal{L}(\theta) :=  \mathbb{E}_{t, p_t(Z|\tilde{Z}), q(\tilde{Z}) } \ \omega(t) \| m^\theta_t(Z) - m_t(Z, \tilde{Z}) \|^2, 
\end{equation}
where $\omega(t) > 0$ is a weighting function, $m_t(Z, \tilde{Z})$ is the object (conditioned on $\tilde{Z}$)  to be learnt and $m_t^\theta$ is a neural network model used to learn the object of interest. Depending on which object one would like to learn/match, we have different loss parametrizations.

\noindent {\bf Flow matching.} The flow matching  loss that we focus in this paper is:
\begin{equation} \label{conditional_opt_app}
    \mathcal{L}_{cfm}(\theta) :=  \mathbb{E}_{t, p_t(Z|\tilde{Z}), q(\tilde{Z}) } \ \omega(t) \| v^\theta_t(Z) - u_t(Z | \tilde{Z}) \|^2, 
\end{equation}
where one aims to learn the flow generating vector field: 
\begin{equation}
    u_t(Z|\tilde{Z}) = \frac{c_t'}{c_t} (Z- (a_t Z_0 + b_t Z_1)) + a_t' Z_0 + b_t' Z_1,
\end{equation}

\noindent {\bf Score matching.} The score matching loss is: 
\begin{equation} \label{smloss_app}
    \mathcal{L}_{sm}(\theta) :=  \mathbb{E}_{t, p_t(Z|\tilde{Z}), q(\tilde{Z}) } \ \lambda(t) \| s^\theta_t(Z) - \nabla \log p_t (Z | \tilde{Z}) \|^2, 
\end{equation}
where $\lambda(t) > 0$ is a weighting function and one aims to learn the score function:
\begin{equation}
    \nabla \log p_t (Z | \tilde{Z}) = \frac{a_t Z_0 + b_t Z_1 - Z}{c_t^2}. 
\end{equation}
If $\lambda(t) = c_t^2$, then this reduces to the original score matching loss \citep{song2019generative}, whereas if $\lambda(t) = \beta(1-t)$, this becomes the score flow loss \citep{song2021maximum}.

\noindent {\bf Noise matching.}
The noise matching loss is:
\begin{equation} \label{smloss_app}
    \mathcal{L}_{nm}(\theta) :=  \mathbb{E}_{t, p_t(Z|\tilde{Z}), q(\tilde{Z}) }  \| \epsilon^\theta_t(Z) - \epsilon_t(Z | \tilde{Z}) \|^2, 
\end{equation}
where one aims to learn the noise:
\begin{equation}
    \epsilon_t(Z|\tilde{Z}) = \frac{Z - (a_t Z_0 + b_t Z_1)}{c_t}.
\end{equation}

\section{Theoretical Results and Proofs} \label{app_proof}

In this section, we provide theorems and proofs to justify the discussions in Section \ref{sect_framework} and Section \ref{sect_model}.

\subsection{Connecting Flow Matching with Conditional Flow Matching}

The following theorem justifies the claim that minimizing $\mathcal{L}_{fm}$ is equivalent to minimizing $\mathcal{L}_{cfm}$.

\begin{theorem} \label{thm_1}
    If the conditional vector field $u_t(Z| \tilde{Z})$ generates the conditional probability path $p_t(Z| \tilde{Z})$, then the marginal vector field $u_t$  in Eq. (\ref{eq_marginalvf}) generates the marginal probability path $p_t$ in Eq.  (\ref{eq_marginalpath}). Moreover, if $p_t(Z) > 0$ for all $t, Z$, then $\mathcal{L}_{fm}$ and $\mathcal{L}_{cfm}$ are equal up to a constant independent of $\theta$.
\end{theorem}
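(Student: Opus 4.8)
The plan is to prove the two claims in turn, following the standard flow matching argument of \cite{lipman2022flow}. For the first claim, I would verify that the marginal pair $(p_t, u_t)$ defined in Eq.~(\ref{eq_marginalpath})--(\ref{eq_marginalvf}) satisfies the continuity (transport) equation $\partial_t p_t(Z) + \nabla \cdot (u_t(Z) p_t(Z)) = 0$. The key move is to start from the fact that each conditional pair $(p_t(\cdot|\tilde Z), u_t(\cdot|\tilde Z))$ satisfies its own continuity equation by hypothesis, then integrate against $q(\tilde Z)\,d\tilde Z$, interchanging the differential operators $\partial_t$ and $\nabla\cdot$ with the integral (justified by the smoothness and positivity assumptions, plus suitable decay of the Gaussian conditional paths). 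Crucially, the product $u_t(Z)p_t(Z)$ appearing inside $\nabla\cdot$ must equal $\int u_t(Z|\tilde Z)\,p_t(Z|\tilde Z)\,q(\tilde Z)\,d\tilde Z$; this is exactly why $u_t$ is defined with the weighting $p_t(Z|\tilde Z)q(\tilde Z)/p_t(Z)$ in Eq.~(\ref{eq_marginalvf}). Then $\int [\partial_t p_t(Z|\tilde Z) + \nabla\cdot(u_t(Z|\tilde Z)p_t(Z|\tilde Z))]q(\tilde Z)\,d\tilde Z = 0$ collapses to the marginal continuity equation, establishing that $u_t$ generates $p_t$.

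For the second claim, I would expand both squared norms. Write $\mathcal{L}_{fm}(\theta) = \mathbb{E}_{t,p_t(Z)}\,\omega(t)\big[\|v_t^\theta(Z)\|^2 - 2\langle v_t^\theta(Z), u_t(Z)\rangle + \|u_t(Z)\|^2\big]$ and similarly $\mathcal{L}_{cfm}(\theta) = \mathbb{E}_{t,p_t(Z|\tilde Z),q(\tilde Z)}\,\omega(t)\big[\|v_t^\theta(Z)\|^2 - 2\langle v_t^\theta(Z), u_t(Z|\tilde Z)\rangle + \|u_t(Z|\tilde Z)\|^2\big]$. The $\|u_t\|^2$ and $\|u_t(Z|\tilde Z)\|^2$ terms are independent of $\theta$, so they contribute only to the constant. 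The first terms agree because the law of $Z$ is $p_t$ in both cases: $\mathbb{E}_{p_t(Z|\tilde Z)q(\tilde Z)}[\,\cdot\,] = \mathbb{E}_{p_t(Z)}[\,\cdot\,]$ for any integrand depending only on $(t,Z)$, by Eq.~(\ref{eq_marginalpath}). For the cross terms, I would compute
\begin{align*}
\mathbb{E}_{t,p_t(Z|\tilde Z),q(\tilde Z)}\,\omega(t)\langle v_t^\theta(Z), u_t(Z|\tilde Z)\rangle
&= \int_0^1\!\!\int\!\!\int \omega(t)\langle v_t^\theta(Z), u_t(Z|\tilde Z)\rangle p_t(Z|\tilde Z) q(\tilde Z)\,dZ\,d\tilde Z\,dt \\
&= \int_0^1\!\!\int \omega(t)\Big\langle v_t^\theta(Z),\, \textstyle\int u_t(Z|\tilde Z)\tfrac{p_t(Z|\tilde Z)q(\tilde Z)}{p_t(Z)}d\tilde Z\Big\rangle p_t(Z)\,dZ\,dt \\
&= \mathbb{E}_{t,p_t(Z)}\,\omega(t)\langle v_t^\theta(Z), u_t(Z)\rangle,
\end{align*}
where the middle step multiplies and divides by $p_t(Z)$ (legitimate since $p_t(Z)>0$) and the last step applies the definition~(\ref{eq_marginalvf}) of $u_t$. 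Hence the two cross terms coincide, and $\mathcal{L}_{fm}(\theta) - \mathcal{L}_{cfm}(\theta)$ is a $\theta$-independent constant.

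The main obstacle I anticipate is purely technical: justifying the interchange of differentiation and integration (Leibniz/Fubini) in the continuity-equation step, and ensuring all the integrals in the cross-term computation are finite so that the rearrangements are valid. For the Gaussian conditional paths of Eq.~(\ref{eq_generalclass}) with the affine-linear vector fields of Eq.~(\ref{gen_vf_formula}), these hold under mild integrability assumptions on $q$ (e.g.\ finite second moments) together with the standing hypothesis $p_t(Z)>0$; I would state these as regularity assumptions rather than belabor the measure-theoretic details. Everything else is elementary algebra once the defining weighting in Eq.~(\ref{eq_marginalvf}) is used in the right place.
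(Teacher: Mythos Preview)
Your proposal is correct and follows essentially the same approach as the paper: verify the continuity equation for the marginal pair by integrating the conditional continuity equations against $q(\tilde Z)$ and exchanging $\partial_t$ and $\mathrm{div}$ with the integral, then expand both squared losses, match the $\|v_t^\theta\|^2$ terms via marginalization and the cross terms via the weighting definition of $u_t$ in Eq.~(\ref{eq_marginalvf}), leaving only $\theta$-independent remainders. The paper likewise handles the interchange-of-limits issues by stating regularity and decay assumptions rather than working through the measure-theoretic details.
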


\begin{proof}
The proof is a straightforward extension of the proofs  of Theorem 1-2 in \citep{lipman2022flow} from conditioning on data samples to conditioning on latent samples and allowing an arbitrary weighting function $\omega(t)$. 

Suppose that the conditional vector field $u_t(Z| \tilde{Z})$ generates the conditional probability path $p_t(Z| \tilde{Z})$, we would like to show that the marginal vector field $u_t$ in Eq. (\ref{eq_marginalvf}) generates the marginal probability path $p_t$ in Eq. (\ref{eq_marginalpath}). To show this, it suffices to verify that $p_t$ and $u_t$ satisfy the continuity equation: 
\begin{equation}
    \frac{d}{dt} p_t(Z) + div(p_t(Z) u_t(Z)) = 0,
\end{equation}
where the divergence operator, $div$, is defined with respect to the latent variable $Z = (Z^1, \dots, Z^d)$, i.e., $div = \sum_{i=1}^d \frac{\partial}{\partial Z^i}$. 

We begin by taking derivative of $p_t(Z)$ with respect to time:
\begin{align}
    \frac{d}{dt} p_t(Z) &= \frac{d}{dt} \int p_t(Z | \tilde{Z}) q(\tilde{Z}) d\tilde{Z} \\
    &=  \int \frac{d}{dt} p_t(Z | \tilde{Z}) q(\tilde{Z}) d\tilde{Z} \\
    &= - \int div (u_t(Z | \tilde{Z}) p_t(Z | \tilde{Z})) q(\tilde{Z}) d\tilde{Z} \\
    &= - div \left( \int u_t(Z | \tilde{Z}) p_t(Z | \tilde{Z}) q(\tilde{Z}) d\tilde{Z} \right) \\
    &= - div(u_t(Z) p_t(Z)).
\end{align}
In the third line, we use the fact that $u_t(\cdot |\tilde{Z} )$ generates $p_t(\cdot | \tilde{Z})$. In the last line, we use Eq. (\ref{eq_marginalvf}). In the second and forth line above, the exchange of integration and differentiation can be justified by assuming that the integrands satisfy the regularity conditions of the Leibniz rule.

Next, we would like to show that if $p_t(Z) > 0$ for all $t, Z$, then $\mathcal{L}_{fm}$ and $\mathcal{L}_{cfm}$ are equal up to a constant independent of $\theta$. We follow \citep{lipman2022flow} and assume that $q(Z)$ and $p_t(Z|\tilde{Z})$ are decreasing to zero sufficiently fast as $\|Z\| \to 0$, that $u_t, v_t, \nabla_\theta v_t$ are bounded, so that all integrals exist and exchange of integration order is justified via Fubini's theorem.

Using the bilinearity of the 2-norm, we decompose the squared losses as:
\begin{align}
    \|v_t^\theta(Z) - u_t(Z)\|^2 &= \|v_t^\theta(Z)\|^2 - 2 \langle v^\theta_t(Z), u_t(Z) \rangle + \|u_t(Z)\|^2, \\
      \|v^\theta_t(Z) - u_t(Z|\tilde{Z})\|^2 &= \|v^\theta_t(Z)\|^2 - 2 \langle v^\theta_t(Z), u_t(Z|\tilde{Z}) \rangle + \|u_t(Z|\tilde{Z})\|^2.
\end{align}

Now, 
\begin{align}
    \mathbb{E}_{p_t(Z)} \|v^\theta_t(Z)\|^2 &= \int \|v^\theta_t(Z)\|^2 p_t(Z) dZ \\ 
    &= \int \int \|v^\theta_t(Z)\|^2 p_t(Z| \tilde{Z}) q(\tilde{Z}) d\tilde{Z} dZ \\
    &= \mathbb{E}_{q(\tilde{Z}), p_t(Z|\tilde{Z})} \|v^\theta_t(Z)\|^2,
\end{align}
where we use Eq. (\ref{eq_marginalpath}) in the second equality above and exchange the order of integration in the third equality. 

Next, we compute:
\begin{align}
     \mathbb{E}_{p_t(Z)} \langle v^\theta_t(Z), u_t(Z) \rangle &= \int \bigg\langle v^\theta_t(Z), \frac{\int u_t(Z|\tilde{Z}) p_t(Z|\tilde{Z}) q(\tilde{Z}) d\tilde{Z} }{p_t(Z)} \bigg\rangle p_t(Z) dZ \\
     &= \int \bigg\langle v^\theta_t(Z), \int u_t(Z|\tilde{Z}) p_t(Z|\tilde{Z}) q(\tilde{Z}) d\tilde{Z} \bigg\rangle dZ \\
     &= \int \int \langle v^\theta_t(Z), u_t(Z|\tilde{Z}) \rangle p_t(Z|\tilde{Z}) q(\tilde{Z}) d\tilde{Z} dZ \\
     &= \mathbb{E}_{q(\tilde{Z}), p_t(Z|\tilde{Z})} \langle v^\theta_t(Z), u_t(Z | \tilde{Z}),
\end{align}
where we first plug in Eq. (\ref{eq_marginalvf}) and then exchange the order the integration in order to arrive at the last equality.

Finally, noting that $u_t$ are $
\omega(t)$ independent of $
\theta$ (and are thus irrelevant for computing the loss gradients), we have proved the desired result.
\end{proof}

\subsection{Identifying the Vector Field that Generates the Gaussian Paths}
Similar to Theorem 3 in \citep{lipman2022flow}, we have the following result, which identifies the unique vector field that generates the Gaussian probability path.

\begin{theorem} \label{thm_gaussianpaths}
    Let $p_t(Z|\tilde{Z})$ be the Gaussian probability path defined in Eq. (\ref{eq_generalclass}) and consider the flow map $\psi_t$ defined as $\psi_t(Z) = a_t Z_0 + b_t Z_1 + c_t Z$ with $c_t >0$. Then the unique vector field that defines $\psi_t$ is:
    \begin{equation}
    u_t(Z|\tilde{Z}) = \frac{c_t'}{c_t} (Z- (a_t Z_0 + b_t Z_1)) + a_t' Z_0 + b_t' Z_1,
    \end{equation}
    where prime denotes derivative with respect to $t$, and $u_t(Z|\tilde{Z})$ generates the Gaussian path $p_t(Z|\tilde{Z})$.
\end{theorem}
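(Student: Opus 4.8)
The plan is to prove this by the standard two-step argument for identifying vector fields of Gaussian flows, following Theorem~3 of \cite{lipman2022flow}. First I would recall the general principle: a time-dependent flow map $\psi_t$ with $\psi_0 = \mathrm{id}$ is generated by a unique vector field $u_t$ defined by the relation $u_t(\psi_t(Z) \mid \tilde{Z}) = \frac{d}{dt}\psi_t(Z)$, provided $\psi_t$ is a diffeomorphism for each $t$ (which holds here since $c_t > 0$ makes $\psi_t(Z) = a_t Z_0 + b_t Z_1 + c_t Z$ an invertible affine map in $Z$). So the computation has two halves: (i) show that $\psi_t$ pushes the standard Gaussian forward to $p_t(\cdot \mid \tilde{Z})$, i.e. $[\psi_t]_* \mathcal{N}(0,I) = \mathcal{N}(a_t Z_0 + b_t Z_1, c_t^2 I)$, and (ii) solve the defining ODE relation for $u_t$ explicitly and check uniqueness.

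For step (i), I would observe that if $Z \sim \mathcal{N}(0,I)$ then $\psi_t(Z) = a_t Z_0 + b_t Z_1 + c_t Z$ is Gaussian with mean $a_t Z_0 + b_t Z_1$ and covariance $c_t^2 I$, which is exactly $p_t(Z \mid \tilde{Z})$ from Eq.~(\ref{eq_generalclass}); hence $\psi_t$ generates this conditional path. For step (ii), I would invert $\psi_t$ to get $\psi_t^{-1}(W) = (W - a_t Z_0 - b_t Z_1)/c_t$, then differentiate $\psi_t$ in $t$ to get $\frac{d}{dt}\psi_t(Z) = a_t' Z_0 + b_t' Z_1 + c_t' Z$, and finally substitute $Z = \psi_t^{-1}(W)$ to express everything in terms of the ``current'' point $W$:
\begin{equation*}
u_t(W \mid \tilde{Z}) = a_t' Z_0 + b_t' Z_1 + c_t' \cdot \frac{W - a_t Z_0 - b_t Z_1}{c_t} = \frac{c_t'}{c_t}\bigl(W - (a_t Z_0 + b_t Z_1)\bigr) + a_t' Z_0 + b_t' Z_1,
\end{equation*}
which is the claimed formula after renaming $W$ back to $Z$. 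Uniqueness follows because the generating vector field of a given (smooth, invertible) flow is determined pointwise by the relation $u_t(\psi_t(Z)) = \partial_t \psi_t(Z)$ together with surjectivity of $\psi_t$; and the fact that $u_t$ generates $p_t(\cdot \mid \tilde{Z})$ in the continuity-equation sense then follows from step (i) combined with the change-of-variables / transport theorem (equivalently, one can directly verify the continuity equation, as in the proof of Theorem~\ref{thm_1}).

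I do not expect any serious obstacle here; the argument is essentially a direct computation once the framework of ``flow map $\Rightarrow$ generating vector field'' is set up. The only point requiring mild care is the logical direction: I should be explicit that I am \emph{defining} $\psi_t$ and then reading off both the path it generates and its generating vector field, rather than starting from $p_t$ and deriving $\psi_t$ — the theorem statement is about the specific affine $\psi_t$, so the claim is really just "this $\psi_t$ has this generating field and pushes forward to this Gaussian." A secondary subtlety is justifying uniqueness of the generating vector field rigorously; I would handle this by noting that any two vector fields generating the same flow $\psi_t$ must agree on the range of $\psi_t$, which is all of $\mathbb{R}^d$ for each $t$. If a fully self-contained argument is wanted, I would alternatively verify directly that the stated $p_t$ and $u_t$ satisfy the continuity equation $\partial_t p_t + \mathrm{div}(p_t u_t) = 0$ by plugging in the Gaussian density and computing both terms, which is a routine but slightly longer calculation.
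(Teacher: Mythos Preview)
Your proposal is correct and follows essentially the same approach as the paper's proof: invert the affine map $\psi_t$, differentiate $\psi_t$ in $t$, and substitute $Z = \psi_t^{-1}(W)$ to read off the vector field. If anything you are slightly more thorough, since the paper does not spell out the pushforward verification or the uniqueness argument that you include.
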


\begin{proof}
Let $p_t(Z|\tilde{Z})$ be the Gaussian probability path defined in Eq. (\ref{eq_generalclass}) and consider the flow map $\psi_t$ defined as $\psi_t(Z) = a_t Z_0 + b_t Z_1 + c_t Z$. We would like to show that the unique vector field that defines $\psi_t$ is:
    \begin{equation}
    u_t(Z|\tilde{Z}) = \frac{c_t'}{c_t} (Z- (a_t Z_0 + b_t Z_1)) + a_t' Z_0 + b_t' Z_1,
    \end{equation}
    and $u_t(Z|\tilde{Z})$ generates the Gaussian path $p_t(Z|\tilde{Z})$.

We denote $w_t = u_t(Z|\tilde{Z})$ for notational simplicity. Then, 
\begin{equation}
    \frac{d}{dt} \psi_t(Z) = w_t(\psi_t(Z)). 
\end{equation}
Since $\psi_t$ is invertible (as $c_t > 0$), we let $Z = \psi^{-1}(Y)$ and obtain 
\begin{equation} \label{eq_proof}
\psi_t'(\psi^{-1}(Y)) = w_t(Y), 
\end{equation}
where the prime denotes derivative with respect to $t$ and we have used the apostrosphe notation for the derivative to indicate that $\psi_t'$ is evaluated at $\psi^{-1}(Y)$.

Inverting $\psi_t(Z)$ gives:
\begin{equation}
    \psi_t^{-1}(Y) = \frac{Y - \mu_t(\tilde{Z})}{c_t},
\end{equation}
where $\mu_t(\tilde{Z}) := a_t Z_0 + b_t Z_1$. 

Differentiating $\psi_t$ with respect to $t$ gives $\psi_t'(Z) = c_t' Z + 
\mu_t'(\tilde{Z})$. 

Plugging the last two equations into  Eq. (\ref{eq_proof}), we obtain:
\begin{equation}
    w_t(Y) = \frac{c_t'}{c_t} ( Y - \mu_t(\tilde{Z})) + \mu_t'(\tilde{Z})
\end{equation}
which is the result that we wanted to show. 

\end{proof}

\subsection{Solution to the Gaussian Schrodinger Bridge Problem} \label{app_GSB}

We prove Theorem \ref{thm_optimal_path} in this subsection. First, we recall Theorem 3 from \citep{bunne2023Schrodinger}. Theorem \ref{thm_optimal_path} then follows from an application of the theorem.

Let $\xi_0 \sim \mathcal{N}(\mu_0, \Sigma_0)$ and $\xi_1 \sim \mathcal{N}(\mu_1, \Sigma_1)$ be two given Gaussian random variables. From now on, by abusing the notation we will also denote the measures of these Gaussians by $\xi_0$ and $\xi_1$ respectively. 

Let $\sigma \geq 0$ and set:
\begin{align}
    D_\sigma &:= (4 \Sigma_0^{1/2}
     \Sigma_1 \Sigma_0^{1/2} + \sigma^4 I)^{1/2} \\
    C_\sigma &:= \frac{1}{2}(\Sigma_0^{1/2} D_\sigma \Sigma_0^{-1/2} - \sigma^2 I).  
\end{align}

Consider the following  Schrodinger bridges with Gaussian marginal constraints:
\begin{equation}
    \min_{\mathbb{P}_0 = \xi_0, \mathbb{P}_1 = \xi_1} D_{KL}(\mathbb{P} \| \mathbb{Q}),
\end{equation}
where $\mathbb{Q}$ is the path measure of the linear SDE: 
\begin{equation}
    dY_t = (c_t Y_t + \alpha_t) dt + g_t dW_t := f_t dt + g_t dW_t.
\end{equation}
Here, $c_t: \RR^+ \to \RR$, $\alpha_t: \RR^+ \to \RR^d$ and $g_t: \RR^+ \to \RR^+$ are smooth functions, and $W_t$ is the standard Wiener process. 

The following theorem is a simplified version of Theorem 3 in \citep{bunne2023Schrodinger}.

\begin{theorem}[Theorem 3 in \citep{bunne2023Schrodinger}] \label{app_thmSB}
    The solution $\mathbb{P}^*$  to the Gaussian Schrodinger bridges is (the path measure of) a Markov Gaussian process whose marginal variable $X_t \sim \mathcal{N}(\mu_t, \Sigma_t)$, 
    where 
    \begin{align}
        \mu_t &= (\tau_t - r_t \tau_1) \mu_0 + r_t \mu_1 + \zeta(t) - r_t \zeta(1), \\
        \Sigma_t &= (\tau_t - r_t \tau_1)^2 \Sigma_0 + r_t^2 \Sigma_1 + r_t (\tau_t - r_t \tau_1) (C_{\sigma^*} + C_{\sigma^*}^T) + \kappa(t,t) (1-\rho_t) I.
    \end{align}
    In the above,
    \begin{align*}
    \tau_t &= \exp\left(\int_0^t c_s ds \right), \ \ \  \kappa(t,t') = \tau_t \tau_{t'} \int_0^t \tau_s^{-2} g_s^2 ds, \ \ \ r_t = \kappa(t,1)/\kappa(1,1), \\
    \sigma^* &= \sqrt{\tau_1^{-1} \kappa(1,1)}, \ \ \  \zeta(t) =  \tau_t \int_0^t \tau_s^{-1} \alpha_s ds, \ \ \ \rho_t = \frac{\int_0^t \tau_s^{-2} g_s^2 ds }{\int_0^1 \tau_s^{-2} g_s^2 ds }. 
    \end{align*}
\end{theorem}

Now we prove Theorem \ref{thm_optimal_path}. 
\begin{proof}[Proof of Theorem \ref{thm_optimal_path}] We shall apply Theorem \ref{app_thmSB} to the Gaussian SB (\ref{Gaussian_SB}) by setting $\mu_0 := Z_0$, $\mu_1 = Z_1$, $\Sigma_0 = \Sigma_1 := \sigma_{min}^2 I$, $\alpha_t := \dot{a}_t Z_0 + \dot{b}_t Z_1$, $c_t := 0$, and $g_t := \omega$.
Then, we have $\tau_t = 1$, $\kappa(t,t') = \omega^2 t$, $r_t = t$, $\sigma^* = \omega$, $\zeta(t) = \int_0^t \alpha_s ds = \left(\int_0^t \dot{a}_s ds \right) Z_0 + \left(\int_0^t \dot{b}_s ds \right) Z_1 = (a_t-a_0) Z_0 + (b_t - b_0) Z_1$, and $\rho_t = t$. Also, $D_{\sigma^*} = (4 \sigma_{min}^4  + \omega^4)^{1/2} I  $ and  $C_{\sigma^*} = C_{\sigma^*}^T = \frac{1}{2}( (4 \sigma_{min}^4  + \omega^4)^{1/2} - \omega^2) I$.

Thus, 
\begin{equation}
\mu_t = (1-t) Z_0 + t Z_1 + (a_t-a_0) Z_0 + (b_t - b_0) Z_1 - t [(a_1-a_0) Z_0 + (b_1 - b_0) Z_1]    
\end{equation}
and 
\begin{align}
    \Sigma_t &= [((1-t)^2 + t^2) \sigma_{min}^2  + t(1-t) ((4 \sigma_{min}^4  + \omega^4)^{1/2} - \omega^2) + \omega^2 t (1-t) ] I \\
    &= \sigma_{min}^2 + \left(\sqrt{4 \sigma_{min}^4 + \omega^4} - 2 \sigma_{min}^2 \right) \cdot t(1-t).
\end{align}
Theorem \ref{thm_optimal_path} then follows from Theorem \ref{app_thmSB} with the above formula for $\mu_t$ and $\Sigma_t$. 
\end{proof}

\subsection{Comparing the Variance of Different Vector Field Models} \label{app_explain}

We begin with providing intuitions for understanding our proposed probability path model. We expect that our model improves upon the other considered models, as it takes advantage of the inherent continuity and correlation in the spatio-temporal data. 
Intuitively, for time series samples whose underlying dynamics are continuous and obey a physical law, a Gaussian sample is typically further from the time series samples, so the distance between a Gaussian sample and a time series sample should generally be larger than the distance between consecutive time series samples (which can be highly correlated). 
Therefore, using a probability path that connects consecutive time series samples could lead to faster convergence and more stable training, when compared to using a path that simply connects a time series sample to a Gaussian sample, since the resulting flow model in the former case can better capture the true system dynamics  with less effort. 

Moreover, if the consecutive samples are sufficiently correlated, then the variance of the VF corresponding to our proposed probability path model can be lower than the variance of the VF corresponding to the other choices of probability paths. Precise statements capturing this are the contents of  Theorem \ref{thm_variance}, which focuses on comparison between our proposed model and the optimal transport VF model of \citep{lipman2022flow}. 


Before proving Theorem \ref{thm_variance}, we start with the following lemma.

\begin{lemma} \label{lemma_1}
    Let $A, B, C, D$ be random vectors  where $C$ and $D$ are independent, both $A$ and $B$ are independent of $C$ and $D$ (but $A$ and $B$ could be dependent). If $Cov(A, B) \geq (Var(C) - Var(D) + Var(B))/2$, then 
    \begin{equation}
        Var(A + D) \geq Var(A-B+C) = Var(B-A + C),
    \end{equation}
where $A \geq B$ means that $A-B$ is positive semidefinite. 
\end{lemma}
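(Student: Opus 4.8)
The plan is to expand both variances using the bilinearity of covariance, cancel the common terms, and reduce the claimed matrix inequality to the stated hypothesis. This is a short computation, so the main work is bookkeeping the independence assumptions carefully.

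\textbf{Step 1: expand the left-hand side.} Since $A$ and $D$ are independent, $\mathrm{Var}(A+D) = \mathrm{Var}(A) + \mathrm{Var}(D)$, where here I read $\mathrm{Var}(\cdot)$ as the covariance matrix.

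\textbf{Step 2: expand the right-hand side.} Write $A - B + C$. Since $C$ is independent of both $A$ and $B$, the cross-covariances involving $C$ vanish, so
\begin{equation}
\mathrm{Var}(A - B + C) = \mathrm{Var}(A - B) + \mathrm{Var}(C) = \mathrm{Var}(A) + \mathrm{Var}(B) - 2\,\mathrm{Cov}(A,B) + \mathrm{Var}(C).
\end{equation}
(Here $\mathrm{Cov}(A,B)$ is understood as the symmetrized cross-covariance $\tfrac12(\mathrm{Cov}(A,B) + \mathrm{Cov}(A,B)^T)$, which is what appears in $\mathrm{Var}(A-B)$; I would note this convention explicitly.) The same computation with $B - A + C$ gives the same expression since $\mathrm{Var}(B-A) = \mathrm{Var}(A-B)$, which proves the claimed equality of the two right-hand sides.

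\textbf{Step 3: compare.} Subtracting,
\begin{equation}
\mathrm{Var}(A+D) - \mathrm{Var}(A-B+C) = \mathrm{Var}(D) - \mathrm{Var}(B) + 2\,\mathrm{Cov}(A,B) - \mathrm{Var}(C),
\end{equation}
which is positive semidefinite precisely when $\mathrm{Cov}(A,B) \succeq \tfrac12\bigl(\mathrm{Var}(C) - \mathrm{Var}(D) + \mathrm{Var}(B)\bigr)$, i.e.\ exactly the hypothesis. This completes the proof.

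\textbf{The main obstacle} is not a mathematical difficulty but a matter of precision: the statement writes scalar-looking quantities like $\mathrm{Var}(C)$ and $\mathrm{Cov}(A,B)$, but the conclusion $\mathrm{Var}(A+D) \succeq \mathrm{Var}(A-B+C)$ is a matrix (positive-semidefinite) inequality, so I would be careful to interpret every $\mathrm{Var}$ as a covariance matrix, every $\mathrm{Cov}(A,B)$ as its symmetrization, and to invoke only that the PSD order is preserved under addition. One should also double-check that the independence hypotheses are used exactly where needed — $C \perp D$ is actually not required for this particular inequality (only $C,D$ each independent of $A,B$ is used), though it may be needed elsewhere when the lemma is applied, so I would keep it in the statement and simply not use it here.
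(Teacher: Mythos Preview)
Your proof is correct and reaches the same identity as the paper, namely $\mathrm{Var}(A+D) - \mathrm{Var}(A-B+C) = \mathrm{Var}(D) - \mathrm{Var}(B) + 2\,\mathrm{Cov}(A,B) - \mathrm{Var}(C)$, from which the conclusion follows. Your route is actually more direct than the paper's: the paper rewrites $A+D = (A-B+C) + B + D - C$ and expands the variance of that three-term sum before simplifying, whereas you expand $\mathrm{Var}(A+D)$ and $\mathrm{Var}(A-B+C)$ separately and subtract; both arguments use only bilinearity and the stated independences, and your observation that the hypothesis $C\perp D$ is never invoked is correct for the paper's proof as well.
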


\begin{proof}
We compute:
\begin{align}
    Var(A + D) &= Var(A - B + C + B + D - C) \\ 
    &= Var(A-B+C) + Var(B + D) + Var(C) + 2 Cov(A-B+C, B+D) \nonumber \\ 
    &\ \ \ \ - 2 Cov(A-B+C, C)  - 2 Cov(B+D, C) \\
    &= Var(A-B+C) + Var(B + D) - Var(C) + 2 Cov(A-B + C, B) \nonumber \\
    &\ \ \ \ + 2 Cov(A-B + C, D) \\ 
    &= Var(A-B+C) + Var(B+D) - Var(C) + 2 Cov(A, B) - 2 Var(B) \\ 
    &= Var(A-B+C) - Var(C) + Var(D) + 2 Cov(A, B) - Var(B),
\end{align}
where we have simply rearranged the terms in the first equality, used the formula $Var(A+B+C) = Var(A) + Var(B) + Var(C) + 2Cov(A,B) + 2 Cov(A,C) + 2 Cov(B,C)$, bilinearity of covariance, the facts that $Cov(A,A) = Var(A)$ and $Var(cA) = c^2 A$ for a scalar $c$, as well as the assumption that both  $A$ and $B$ are independent of $C$, $D$  in the last four equalities. 

Therefore, if $- Var(C) + Var(D) + 2 Cov(A, B) - Var(B) \geq 0$, then we have $Var(A + D) \geq Var(A-B+C)$.

\end{proof}



    

Now we prove Theorem \ref{thm_variance}.
\begin{proof}
    Note that $c_t' = \frac{\sigma^2 (1-2t)}{2 \sqrt{\sigma_{min}^2 + \sigma^2 t(1-t)}}$, $Var(c_t' \xi) = (c_t')^2 I$ and $Var(-\eta) = I$. Therefore, using these and applying Lemma \ref{lemma_1} with $A := z^{\tau-1}$, $B := z^\tau$,  $C := c_t' \xi$ and $D := - \eta$, allow us to establish the claim that $Var(\tilde{u}_t(Z_t|z^{\tau-1})) \geq Var(u_t(Z_t|z^{\tau-1}, z^\tau))$ if
    \begin{equation}
        Cov(z^{\tau-1}, z^\tau) \geq \frac{1}{2} \left( \frac{\sigma^4(1-2t)^2 I}{4 (\sigma_{min}^2 + \sigma^2 t (1-t))}  + Var(z^\tau) - I \right).
    \end{equation}
    Since  the function $f(t) 
 := \frac{\sigma^4(1-2t)^2}{4 (\sigma_{min}^2 + \sigma^2 t (1-t))} $ is maximized at the endpoints $t=0,1$ with the maximum value of $\sigma^4/4\sigma_{min}^2$, the desired result stated in the theorem follows.
\end{proof}


Lastly, we provide some discussions following Theorem \ref{thm_variance} (using the notations there).

\noindent {\bf Discussions.}
Let us denote $v_t^\theta(Z) := v^\theta_t(z^\tau, z^{\tau-1}, \xi)$ to show the explicit dependence of the  vector field neural net on the random samples $t, z^\tau$, $z^{\tau-1}$ and $\xi \sim \mathcal{N}(0,I)$ drawn during each update of gradient descent during training. During each gradient descent update, our model involves computation of  
\begin{align}
\nabla_\theta \mathcal{L}_{cfm}(\theta; t, \xi, z^\tau, z^{\tau-1}) &= 2 \nabla_\theta v_t^\theta(z^\tau, z^{\tau-1}, \xi)^T \cdot (v_t^\theta(z^\tau, z^{\tau-1}, \xi) - u_t(Z_t| z^\tau, z^{\tau-1} ) )  \label{eq_our_grad} \\
&=: 2 \nabla_\theta v_t^\theta(z^\tau, z^{\tau-1}, \xi)^T \cdot \Delta_t^\theta(  z^{\tau-1}, z^\tau, \xi), 
\end{align}
with $t \sim \mathcal{U}[0,1]$, $\xi \sim \mathcal{N}(0,I)$ and the latent samples $z^\tau, z^{\tau-1}$ drawn randomly.

Similarly, for the rectified flow model let us denote $\tilde{v}_t^\theta(Z) := v^\theta_t(z^{\tau-1}, \eta)$ to show the explicit dependence of the  vector field neural net on the random samples $t$, $z^{\tau-1}$ and $\eta \sim \mathcal{N}(0,I)$ drawn during each update of gradient descent during training. 
Each update of gradient descent using the rectified flow model involves computation of
\begin{align}
\nabla_\theta \tilde{\mathcal{L}}_{cfm}(\theta; t, \eta, z^{\tau-1}) &= 2 \nabla_\theta \tilde{v}_t^\theta( z^{\tau-1}, \eta  )^T \cdot (\tilde{v}_t^\theta( z^{\tau-1}, \eta ) - \tilde{u}_t(\tilde{Z}_t| z^{\tau-1} ) )  \label{eq_river_grad} \\ 
&=: 2 \nabla_\theta \tilde{v}_t^\theta( z^{\tau-1}, \eta )^T \cdot \tilde{\Delta}_t^\theta( z^{\tau-1}, \eta),    
\end{align}
with $t \sim \mathcal{U}[0,1]$, $\eta \sim \mathcal{N}(0,I)$ and the latent sample $z^{\tau-1}$ drawn randomly.

Lower gradient variance results in smoother training loss curve and potentially faster convergence, so it is useful to compare the variances of the loss gradient for the two models.
However, the variances are highly dependent on $\nabla_\theta v_t^\theta$, $\nabla_\theta \tilde{v}_t^\theta$ and their covariance with the other random vectors appearing in Eq. (\ref{eq_our_grad}) and Eq. (\ref{eq_river_grad}), making such comparison challenging without strong assumptions. Heuristically, the difference in the variances of the loss gradient during each update for the two models is primarily  determined by the difference between  $Var(\Delta_t^\theta(z^{\tau-1}, z^\tau, \xi))$ and $Var(\tilde{\Delta}_t^\theta( z^{\tau-1}, \eta ))$ if $\nabla_\theta v_t^\theta$ and $\nabla_\theta \tilde{v}_t^\theta$ are relatively stable. In this case, we have $Var(\Delta_t^\theta(z^{\tau-1}, z^\tau, \xi)) \leq Var(\tilde{\Delta}_t^\theta( z^{\tau-1}, \eta ))$ if we suppose the assumptions in Theorem \ref{thm_variance}, $Var(\tilde{v}_t^\theta) \geq Var(v_t^\theta)$ and $Cov(v_t^\theta, u_t) \geq Cov(\tilde{v}_t^\theta, \tilde{u}_t)$.

The implications of Theorem \ref{thm_variance}  together with the heuristics above could partially explain why our probability path model leads to smoother loss curve and faster convergence (see Figure \ref{fig:convergence}) compared to the RIVER method of \citep{davtyan2023efficient}. 
On the other hand,  the dependence of the lower bound in the theorem on $\sigma$ and $\sigma_{min}$ suggests that using values of $\sigma$ that is relatively large enough might not keep the variance of the vector field  low, which could partially explain the phenomenon displayed in Figure \ref{fig:convergence}, where using $\sigma = 0.1$ and $\sigma_{min} = 0.001$ leads to large loss fluctuations.

\section{Additional Empirical Results} \label{app_additional}
In this section, we provide additional experimental results.

\subsection{Visualization of Flow Patterns and Dynamics} \label{app_visual}

Figure \ref{fig:result_visual} provides visual results of the predicted snapshots by our model for each task. The visualizations highlight our model's ability to capture complex flow patterns and dynamics.

\begin{figure}[!h]
\centering
\vspace{+1.0cm}

\begin{overpic}[width=0.95\textwidth]{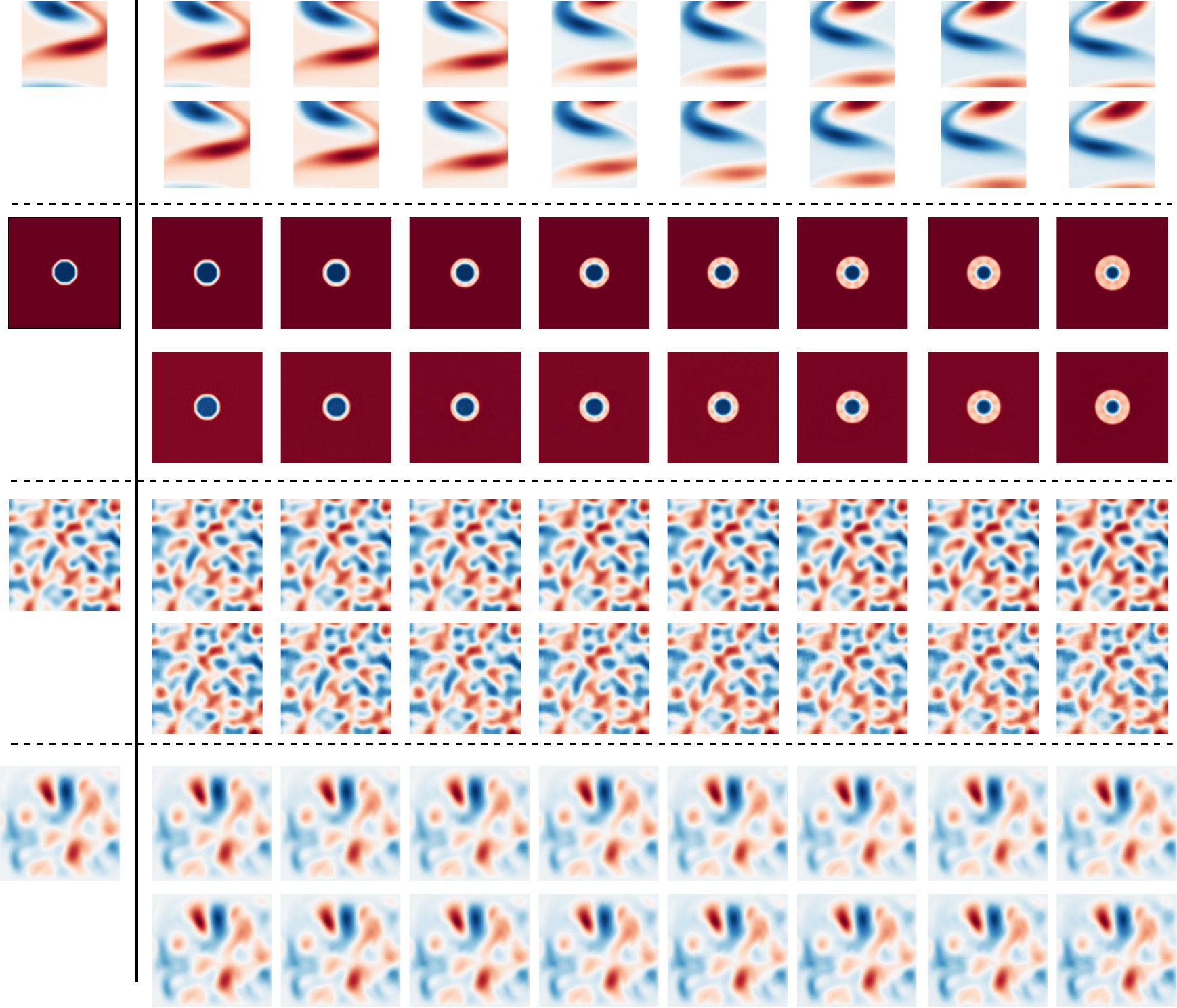}
\put(-2,88){\footnotesize last con. frame}
\put(30,88){\footnotesize time $\rightarrow$}

\put(-2,80){\rotatebox{90}{\footnotesize GT}}
\put(8,72){\rotatebox{90}{\footnotesize pred.}}

\put(-2,61){\rotatebox{90}{\footnotesize GT}}
\put(8,49){\rotatebox{90}{\footnotesize pred.}}

\put(-2,37){\rotatebox{90}{\footnotesize GT}}
\put(8,27){\rotatebox{90}{\footnotesize pred.}}

\put(-2,14){\rotatebox{90}{\footnotesize GT}}
\put(8,3){\rotatebox{90}{\footnotesize pred.}}

\end{overpic}
\caption{Visualization of predicted frames using our model of probability path for the considered tasks. From top to bottom: fluid flow past a cylinder, shallow-water equation, diffusion-reaction equation, and incompressible Navier-Stokes equation. In each case, GT indicates the ground truth frames and pred. indicates the predicted frames.}
\label{fig:result_visual}
\end{figure}

\subsection{Ablation Studies}  \label{app_ablation}

To further assess our model, we conducted a detailed ablation study focusing on the impact of various hyperparameters for the fluid flow (FPC) task. Specifically, we study the impact of the values of $\sigma$, the choice of sampler, and the number of sampling steps during inference. For small $\sigma_{sam}$, we find that this parameter has negligible impact on test performance, so we fixed $\sigma_{\text{sam}} = 0$ for all experiments in this section. 

\textbf{Impact of $\sigma$ on training stability.} Figure \ref{loss_fluid_comparesigma} illustrates the effect of different $\sigma$ values on the training loss curve for our method on the fluid flow past a cylinder task. We observed that larger values of $\sigma$ (e.g., $\sigma = 0.1$) resulted in smoother loss curves and more stable convergence during training.

\textbf{Effect of $\sigma$, sampler choice, and sampling steps on test performance.} Table \ref{tab:simpleflow_ablation} investigates how different values of $\sigma$, the choice of sampler (Euler or RK4), and the number of sampling steps affect test performance. It can be seen, that even with as few as one sampling step $(N=2)$ using the Euler scheme, our model perform reasonably well. However, increasing the number of sampling steps or employing the more computationally intensive RK4 sampler can help to lead to better results. From Figure \ref{fig:mse_samplingsteps}, we see that our model leads to the smallest test MSE for both samplers at all sampling steps. In particular, lowest MSEs can be achieved using as few as one sampling step for both sampling schemes. Moreover, while using the RK4 sampler can lead to a lower test MSE for all models, the performance gap is much smaller for our model, showing robustness of our model to sampling choice.  

\begin{figure}[!h] 
\centering
    \begin{overpic}[width = 0.9\textwidth]{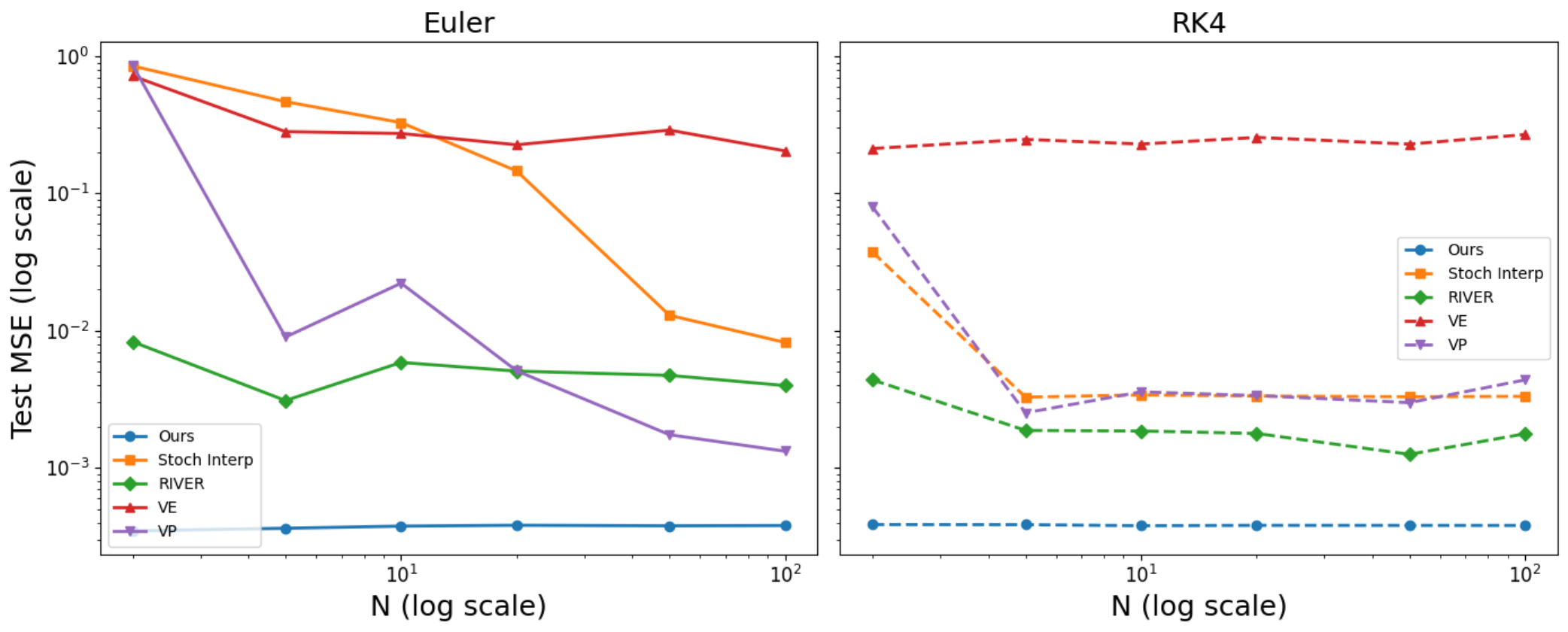}
    \end{overpic}
\caption{Test MSE vs. number of sampling steps $N-1$ (we consider $N=2,5,10,20,50,100$) for the five models with the Euler sampler (left) and the RK4 sampler (right).}
\label{fig:mse_samplingsteps}
\end{figure}

\textbf{Effect of context frames on test performance.} Table \ref{tab:ablation_contextframes} shows the test performance in terms of the considered evaluation metrics in the case when the neural networks are trained without using the context frames, in which case $C$ is null in  (\ref{conditional_opt_practice}) and we are minimizing the loss in (\ref{conditional_opt}).  We see that without using the proposed context frames, the test performance of all models degrades across the evaluation metrics, showing the benefits of using the additional context information. The degradation is particularly significant for RIVER, VE-diffusion and VP-diffusion.
\begin{table}[!h]
	\caption{Ablation study to assess the impact of context frames on test performance of the considered models, given that the same pre-trained autoencoder is used. Results are averaged over 5 generations.}
	\label{tab:ablation_contextframes}
	\centering
	\scalebox{0.9}{
		\begin{tabular}{l c  c ccccccc}
			\toprule
			Model &  Test MSE ($\downarrow$) & Test RFNE ($\downarrow$) & PSNR ($\uparrow$) & SSIM ($\uparrow$) \\	
			\midrule 
			RIVER & 1.21  & 1.13 & 20.84  &  0.25  \\
			VE-diffusion  & 1.01  & 1.07  & 23.21 & 0.23   \\
			VP-diffusion & 1.04    & 1.06  & 21.18  & 0.24   \\
   			Stochastic interpolant  &  5.37e-03 & 7.04e-02 & 41.36 & 0.97   \\
      \midrule 
      Ours ($\sigma  = 0.01$, $\sigma_{sam} = 0$)   & {\bf 8.42e-04} & {\bf 3.18e-02}  & {\bf 46.75} & {\bf 0.99}    \\
			\bottomrule
	\end{tabular}}
\end{table}

We further assess the impact of the random conditioning frames $z^c$, considering the case when the neural networks are trained with the reference frames but without using random conditioning frames, in which case $C = z^{\tau-1}$ in  (\ref{conditional_opt_practice}).  Table \ref{tab:ablation_randomframes} shows the results. We see that in this case RIVER and  VE-diffusion achieve lower test performance across all evaluation metrics, which shows that these models indeed benefit from the use of conditional frames. Interestingly, our model, the stochastic interpolant model and the VP-diffusion model remain relatively robust, with comparable or even improved test performance. Despite this, our model outperforms the other models across all evaluation metrics regardless of whether the conditioning frames are involved. For consistency and fair comparison with RIVER, we retain the conditioning scheme by default. 
\begin{table}[!h]
	\caption{Ablation study to assess the impact of the random conditioning frames on test performance of the considered models, given that the same pre-trained autoencoder is used. Results are averaged over 5 generations.}
	\label{tab:ablation_randomframes}
	\centering
	\scalebox{0.9}{
		\begin{tabular}{l c  c ccccccc}
			\toprule
			Model &  Test MSE ($\downarrow$) & Test RFNE ($\downarrow$) & PSNR ($\uparrow$) & SSIM ($\uparrow$) \\	
			\midrule 			RIVER & 4.68e-03  & 6.19e-02 & 43.26  &  0.97  \\
            VE-diffusion  &  3.96e-01 & 6.04e-01  & 26.40 & 0.53   \\
			VP-diffusion & 1.25e-03   & 3.67e-02  & 45.73 &  0.99  \\
   			Stochastic interpolant  &  2.33e-03 & 4.90e-02 & 43.99 & 0.98   \\
      \midrule 
      Ours ($\sigma  = 0.01$, $\sigma_{sam} = 0$)   & {\bf 3.21e-04} & {\bf 2.16e-02}  & {\bf 49.23} & {\bf 1.00}    \\
			\bottomrule
	\end{tabular}}
\end{table}

\textbf{Sensitivity to $\sigma_{min}$.} Table \ref{tab:sigma_min_analysis} shows that using $\sigma_{min} > 0$ (while fixing the other hyperparameters) leads to noticeable improvement in test performance of our model across the evaluation metrics. This is in line with the observation that using $\sigma_{min} > 0$ improves training stability, since this alleviates the singularity of the target VF (\ref{eq_targetVF}) at $t=0, 1$. In fact,  we see that using $\sigma_{min} > 0.001$ (the default value that we use) leads to better test results for the FPC task. 
\begin{table}[!h]
	\caption{Sensitivity analysis of our model to $\sigma_{min}$. Results are averaged over 5 generations.}
	\label{tab:sigma_min_analysis}
	\centering
	\scalebox{0.9}{
		\begin{tabular}{l c  c ccccccc}
			\toprule
			$\sigma_{min}$ &  Test MSE ($\downarrow$) &  Test RFNE ($\downarrow$) &  PSNR ($\uparrow$) & SSIM ($\uparrow$) \\	
			\midrule 
			0.0 & 1.98e-03 & 4.41e-02 & 44.74 & 0.99 \\
			0.001 & 3.79e-04 & 2.30e-02 & 48.88 & 1.00 \\
   			0.01  & 3.78e-04 & 2.26e-02 & 49.11 & 1.00 \\
            	0.1 &  3.17e-04 & 2.12e-02 & 49.52 & 1.00 \\ 
                1.0 & 3.46e-04 & 2.17e-02 & 49.44 & 1.00 \\
			\bottomrule
	\end{tabular}}
\end{table} 

\begin{figure}[!t] 
\vspace{0.5cm}
\begin{minipage}{0.46\textwidth}
    \begin{overpic}[width = 1.1\textwidth]{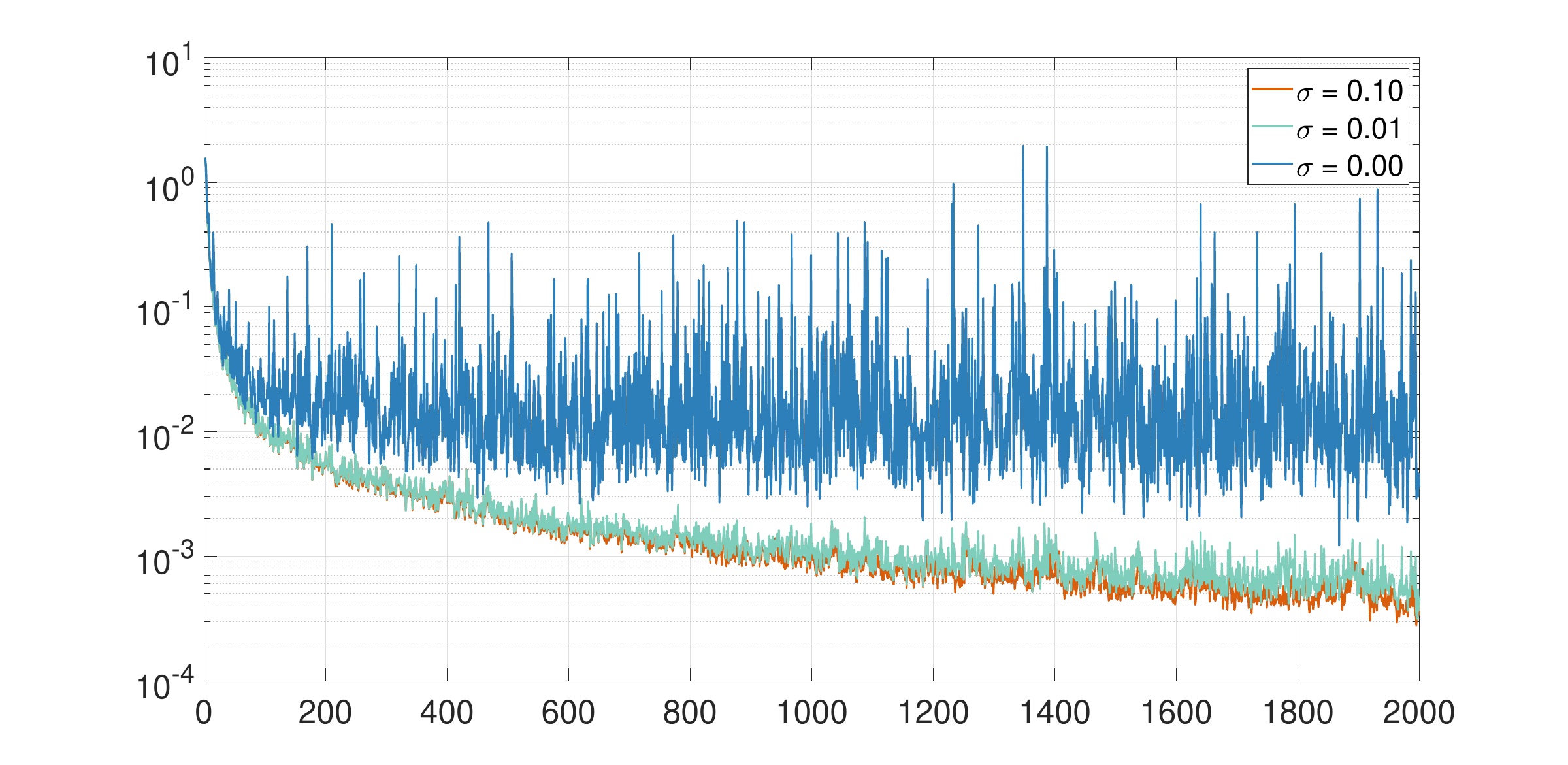}
    \put(3,21){\rotatebox{90}{Loss}}
    \put(45,-3){Epoch}
    \put(30,48){\textbf{Euler Discretization}}
    \end{overpic}
\end{minipage}
\hspace{0.5cm}
\begin{minipage}{0.46\textwidth}
    \begin{overpic}[width = 1.1\textwidth]{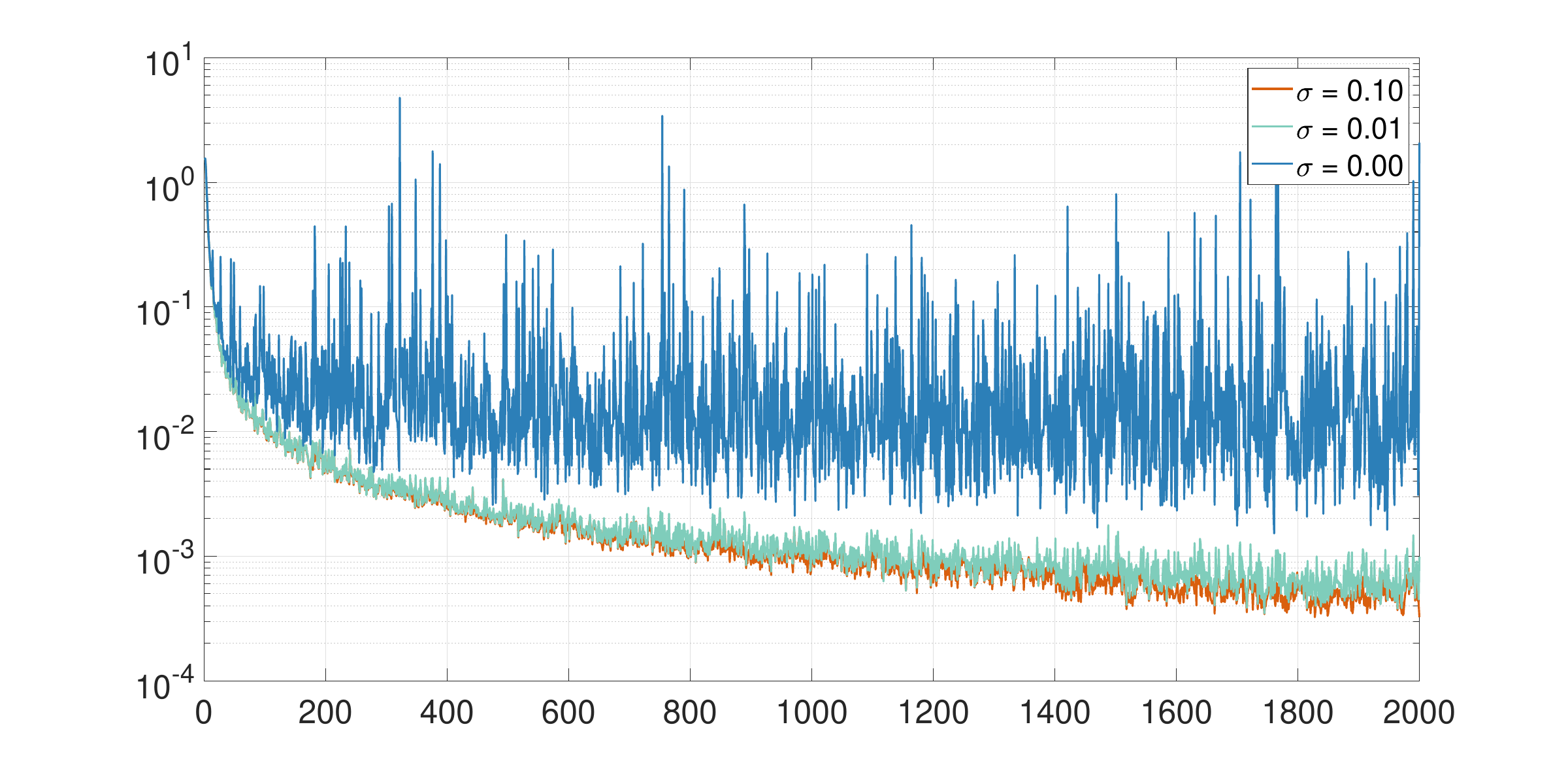}
    \put(3,21){\rotatebox{90}{Loss}}
    \put(45,-3){Epoch}
    \put(31,48){\textbf{RK4 Discretization}}
    \end{overpic}
\end{minipage}
\vspace{+0.25cm}
\caption{Training loss for different values of $\sigma$ using our probability path model. The left subplot shows results for the Euler sampler, and the right subplot for the RK4 sampler. We see that the loss curve is sensitive to the choice of $\sigma$, with larger values of $\sigma$ giving smoother loss curves.}
\label{loss_fluid_comparesigma}
\end{figure}

\begin{table}[!h]
\vspace{0.3cm}
	\caption{Ablation study for the fluid flow past a cylinder task. Results are averaged over 5 generations.}
	\label{tab:simpleflow_ablation}
	\centering
	\scalebox{0.85}{
		\begin{tabular}{l c c | c  c ccccccc}
			\toprule
			$
   \sigma$ & sampler & $N$ & Test MSE ($\downarrow$) & Test RFNE ($\downarrow$) & PSNR ($\uparrow$) & SSIM ($\uparrow$) \\		       \midrule 
    0.0 & Euler & 5 &   7.02e-04 & 2.96e-02  & 47.15 & 0.99  \\
			0.01  & Euler & 5 & 3.63e-04 & 4.14e-03  & 48.89 & 1.00 \\
			0.1  & Euler & 5 & 2.97e-03  & 4.11e-02 & 43.99  & 0.98 \\
			\midrule 
			 0.0 & Euler & 10 & 7.31e-04 & 3.01e-02 & 47.02  & 0.99 \\
			0.01  & Euler & 10 & 3.76e-04 & 2.31e-02 & 48.76 & 1.00  \\
			0.1  & Euler & 10 & 2.85e-03  & 5.04e-02 & 44.12 & 0.98 \\
   			\midrule 
			0.0 & RK4 & 10 & 3.89e-04 & 2.26e-02 & 49.19 & 1.00  \\
			0.01  & RK4 & 10 & 3.79e-04  & 2.30e-02 & 48.88  & 1.00   \\
			0.1 & RK4 & 10 & 6.56e-03  & 7.83e-02 &  40.68 & 0.97  \\
   			\midrule 
                0.0 & Euler & 20 & 7.40e-04  &  3.03e-02 & 46.98  &  0.99 \\
			0.01  & Euler &  20  & 3.82e-04  & 2.33e-02 & 48.71 & 1.00 \\
			0.1  & Euler & 20 & 2.78e-03  & 4.98e-02  & 44.19 & 0.98  \\
   			\midrule 
			0.0 & RK4 & 20 & 3.88e-04  & 2.26e-02  &  49.19 & 1.00
  \\
			0.01  & RK4 & 20 & 6.53e-04  & 2.80e-02 &  47.63 & 0.99 \\
			0.1  & RK4 & 20 & 6.51e-04 & 7.80e-02  & 40.70 & 0.97 \\
   			\bottomrule \\
	\end{tabular}}
\end{table}

\section{Experimental Details} \label{app_exp}

In this section, we provide the experimental details for the tasks considered in Section \ref{sect_exp}.

\subsection{On the Choice of Gaussian Reference Processes}
Our choice of a Gaussian reference process is not directly derived from the characteristics of a particular spatio-temporal dataset, but rather follows a modeling convention that is widely adopted, particularly when physical or statistical smoothness assumptions are in play.
Gaussian processes are a standard modeling choice in these domains for several reasons: 

\begin{itemize}
    \item They arise as solutions to linear stochastic partial differential equations (e.g., the heat equation with additive white noise), which are common in spatio-temporal physical systems.
    \item They offer desirable analytical properties such as closed-form marginals, smooth sample paths, and tractable likelihoods.
    \item In many practical settings (e.g., climate modeling, diffusion, spatial statistics), Gaussian priors are used because they act as regularizing priors on functions or fields that evolve over space and time.
\end{itemize}

\subsection{Details on the Datasets} \label{app_datasets}
\noindent {\bf Fluid flow past a cylinder (FPC).} 
We use the fluid flow past a stationary cylinder at a Reynolds number of 100 as a simple test problem. This fluid flow is a canonical problem in fluid dynamics characterized by a periodically shedding wake structure~\citep{erichson2020shallow,erichson2019randomized}. The flow dynamics are governed by the two-dimensional incompressible Navier–Stokes equations:

\[
\frac{\partial \mathbf{u}}{\partial t} + (\mathbf{u} \cdot \nabla)\mathbf{u} = -\frac{1}{\rho} \nabla p + \nu \nabla^2 \mathbf{u},
\]

\[
\nabla \cdot \mathbf{u} = 0,
\]

where \(\mathbf{u} = (u, v)\) is the velocity field, \(p\) is the pressure, \(\rho\) is the fluid density, and \(\nu\) is the kinematic viscosity. The vorticity field \(\omega\) is obtained from the velocity field via:
\[
\omega = \nabla \times \mathbf{u},
\]
providing insights into the rotational characteristics of the flow.

For simulating the data, the Immersed Boundary Projection Method (IBPM) has been used~\citep{colonius2008fast}. The flow tensor has dimensions $199 \times 449 \times 151$, representing 151 temporal snapshots on a $449 \times 199$ spatial grid. We crop and spatially subsample the data which results in a $64\times64$ spatial field.

\noindent {\bf Shallow-water equation (SWE).}
The shallow-water equations, derived from the compressible Navier-Stokes equations, can be used for modeling free-surface flow problems. We consider the 2D equation used in \citep{takamoto2022pdebench}, which is the following  system of hyperbolic PDEs:
\begin{equation}
    \partial_t h + \nabla h {\bf u} = 0, \ \  \partial_t h {\bf u} + \nabla \left({\bf u}^2 h + \frac{1}{2} g_r h^2 \right) = - g_r h \nabla b,
\end{equation}
where ${\bf u} = u, v$ being the velocities in the horizontal and vertical direction respectively, $h$ describes the water depth, and $b$ describes a spatially varying bathymetry. $h {\bf u}$ can be interpreted as the directional
momentum components and $g_r$ describes the gravitational acceleration. The mass and momentum conservation properties can hold across shocks in the solution and thus challenging datasets can be
generated. This equation finds application in modeling tsunamis and flooding events.

We use the dataset generated and provided by PDEBench \citep{takamoto2022pdebench}. The data file ({\url{2D_rdb_NA_NA.h5}}) can be downloaded from \url{https://github.com/pdebench/PDEBench/tree/main/pdebench/data_download}. The data sample is a series of 101 frames at a 128 $\times$ 128 pixel resolution and come with 1 channel. 
The simulation considered in \citep{takamoto2022pdebench} is a 2D radial dam break scenario. On a square domain $\Omega = [-2.5, 2.5]^2$, the
water height is initialized as a circular bump in the center of the domain:
\[
h(t=0, x, y) = \begin{cases}
2, \text{ for } r < \sqrt{x^2 + y^2}, \\
1, \text{ for } r \geq  \sqrt{x^2 + y^2},
\end{cases}
\]
with the radius $r$ randomly sampled from $\mathcal{U}(0.3, 0.7)$. The dataset is simulated with a finite volume solver using the {\it PyClaw} package. 
We apply standardization and then normalization to the range of $[-1,1]$ to preprocess the simulated data.

\noindent {\bf Incompressible Navier-Stokes equation (NSE).}
The Navier-Stokes equation is the incompressible version of the compressible Navier-Stokes equation, and it can be used to
model hydromechanical systems,
turbulent dynamics and weather. We use the inhomogeenous version of the equation (which includes a vector field forcing term ${\bf u}$) considered by \citep{takamoto2022pdebench}: 
\begin{equation}
    \nabla \cdot {\bf v} = 0, \ \ \
    \rho (\partial_t {\bf v} + {\bf v} \cdot \nabla {\bf v} ) = - \nabla p + \eta \Delta {\bf v} + {\bf u},
\end{equation}
where $\rho$ is the mass density, ${\bf v}$ is the fluid velocity, $p$ is the gas pressure and $\eta$ is shear viscosity. 
The  initial conditions ${\bf v}_0$ and inhomogeneous forcing parameters ${\bf u}$ are each drawn from isotropic Gaussian random fields with truncated power-law decay $\tau$ of the power spectral density and scale $\sigma$, where
$\tau_{v_0} = -3$, $\sigma_{v_0} = 0.15$, $\tau_u = -1$, $\sigma_u = 0.4$.  The domain is taken to be the unit square $\Omega  = [0, 1]^2$ and the viscosity $\eta = 0.01$.
The equation is numerically simulated using {\it Phiflow}. Boundary conditions are taken to be Dirichlet to clamp the field
velocity to zero at the perimeter.

We use the dataset generated and provided by PDEBench \citep{takamoto2022pdebench}. The data file (\url{ns_incom_inhom_2d_512-0.h5}) can be downloaded from 
{\url{https://github.com/pdebench/PDEBench/tree/main/pdebench/data_download}}. The data sample is a series of 1000 frames at a 512 $\times$ 512 pixel resolution and come with 2 channels.
We do not apply any data preprocessing procedure here.

\noindent {\bf Diffusion-reaction equation (DRE).}
We use the 2D extension of diffusion-reaction equation of \citep{takamoto2022pdebench} which describes  two non-linearly coupled variables, namely
the activator $u = u(t, x, y)$ and the inhibitor $v = v(t, x, y)$. The equation is given by:
\begin{align}
    \partial_t u &= D_u \partial_{xx} u + D u \partial_{yy} u + R_u, \\
    \partial_t v &= D_v \partial_{xx} v + D_v \partial_{yy} v + R_v, 
\end{align}
where $D_u$ and $D_v$ are the diffusion coefficient for the activator and inhibitor respectively, $R_u =
R_u(u, v)$ and $R_v = R_v(u, v)$ are the activator and inhibitor reaction function respectively. The
domain of the simulation includes $x \in (-1, 1)$, $y \in (-1, 1)$, $t \in (0, 5]$. This equation can be used for modeling biological pattern formation.

The reaction functions for the activator and inhibitor are defined by the Fitzhugh-Nagumo equation as:
$R_u(u, v) = u - u^3 - k - v$, $R_v(u, v) = u - v$, 
where $k = 5 \times 10^{-3}$, and the diffusion coefficients for the activator and inhibitor are $D_u = 1 \times 10^{-3}$ and $D_v = 5 \times 10^{-3}$ respectively. The initial condition is generated as standard Gaussian
noise $u(0, x, y) \sim \mathcal{N}(0, 1.0)$ for $x \in (-1, 1)$ and $y \in (-1, 1)$. We take a no-flow Neumann boundary condition: $D_u \partial_x u = 0$, $D_v \partial_x v = 0$, $D_u \partial_y u = 0$, and $D_v \partial_y v = 0$ for $x, y \in (-1, 1)^2$.

We use a downsampled version of the dataset provided by PDEBench \citep{takamoto2022pdebench}. The data file ({\url{2D_diff-react_NA_NA.h5}) can be downloaded from {\url{https://github.com/pdebench/PDEBench/tree/main/pdebench/data_download}}. The data sample is a series of 101 frames at a 128 $\times$ 128 pixel resolution and come with 2 channels. The sample frames are generated using the finite volume method for spatial discretization, and the time integration is performed using the built-in fourth order Runge-Kutta method in the {\it scipy} package. 
We do not apply any data preprocessing procedure here.

\subsection{Details on Pre-Training the Autoencoder}

We provide details on pre-training the autoencoder here. 
The choice of first pre-training an autoencoder is motivated by the computational challenges of working directly with the high-dimensional spatial resolution of PDE datasets. Training directly in the ambient space requires substantial GPU memory and computational resources, making it impractical for large-scale or high-resolution datasets. By leveraging a latent-space representation, we achieve significant dimensionality reduction while preserving the essential structure of the data, enabling efficient training and inference with standard hardware configurations. For these datasets, latent-space modeling provides a critical balance between computational efficiency, scalability, and performance. 

We use the same architecture for the encoder and decoder for all the tasks, with the architecture parameters chosen based on the complexity of the task.

\noindent {\bf The encoder.} The encoder first applies a 2D convolution (\verb|conv_in|) to the input frame, which reduces the number of channels from \verb|in_channels| to \verb|mid_channels|, and processes the spatial dimensions. Then, a series of ResidualBlock layers, which progressively process and downsample the feature map, making it smaller in spatial dimensions but more enriched in terms of features, are applied. After the residual blocks, the feature map undergoes an attention process via a multi-head attention layer. This layer helps the encoder focus on important parts of the input, learning relationships between spatial positions in the image. For the post-attention step, the feature map is further processed by residual blocks and normalized, preparing it for the final convolution. The output of the encoder is obtained by applying a final 2D convolution (\verb|out_conv|), which maps the processed feature map to the desired number of output channels (\verb|out_channels|).

\noindent {\bf The decoder.} The decoder takes the encoded feature map and transforms it back into an output with similar spatial dimensions as the input. Similar to the encoder, the decoder starts with a convolution that adjusts the number of channels from \verb|in_channels| to \verb|mid_channels|. Then, an attention mechanism (similar to the encoder) is applied to focus on important aspects of the encoded features. Next, a series of ResidualBlock layers, combined with UpBlock layers, are used to progressively increase the spatial dimensions of the feature map (upsampling), undoing the compression applied by the encoder.  After the upsampling, the output is normalized and passed through a final convolution (\verb|out_conv|), mapping the internal feature representation to the desired number of output channels (\verb|out_channels|).

Table \ref{tab:arch_params_ae} summarizes the architecture parameters used for the considered tasks.
\begin{table}[!h]
	\caption{Parameters chosen for the encoder (decoder) architecture.} \label{tab:arch_params_ae}
	\centering
	\scalebox{1.0}{
		\begin{tabular}{l c c c c}
			\toprule
			Task & Fluid flow  & Shallow-water eq.   & Navier-Stokes eq. & Diffusion-reaction eq.  \\	
			\midrule 
			\hspace{-0.11cm}\small \texttt{in\_channels} & 1 (1) & 1 (1) & 2 (2) & 2 (2) \\ 
   			\hspace{-0.11cm}\small \texttt{out\_channels} & 1 (1) & 1 (1) & 2 (2) & 2 (2) \\ 
			\hspace{-0.11cm}\small \texttt{mid\_channels} & 64 (128) & 128 (256) & 128 (256) & 128 (256)  \\ 

			\bottomrule
	\end{tabular}}
\end{table}

\noindent {\bf Training details.} 
We train the autoencoder  using AdamW with batch size of 32,  no weight decay and $\beta=(0.9,0.999)$. We use the cosine learning rate scheduler with warmup. For the fluid flow past a cylinder task, we train for 2000 epochs and use learning rate of 0.001. For the Navier-Stokes task, we train for 500 epochs and use learning rate of 0.0001.  For the other two tasks we train for 5000 epochs and use learning rate of 0.0005. 
Our implementation is in PyTorch, and all experiments are run on an NVIDIA A100-SXM4 GPU
with 40 GB VRAM belonging to an internal SLURM cluster.

\noindent {\bf Impact of the autoencoder (AE) size on test performance.} We consider using pre-trained AEs with increasing number of middle channels (\texttt{mid\_channels}) in the encoder and decoder, resulting in AEs with 374,605, 1,485,589,  5,916,709 and 23,615,557 trainable parameters respectively.  Table \ref{tab:AE_ablation} shows the effect of the size of AE on the test performance for all models. We see that there is an optimal size of AE that leads to the best test performance for RIVER, the stochastic interpolant model and our model. For VE-diffusion, the best test performance is achieved with the smallest AE, suggesting that the model might overfit when AE increases in size or suffer from optimization issues in bigger latent spaces. For VP-diffusion, the test results are relatively insensitive to the size of the AE used.
\begin{table}[!h]
\vspace{0.3cm}
	\caption{Ablation study to assess the impact of the size of the pre-trained autoencoder (AE)  on test performance using the fluid flow (FPC) task. The number of channels in the decoder is in parenthesis. Results are averaged over 5 generations.   }
	\label{tab:AE_ablation}
	\centering
	\scalebox{0.85}{
		\begin{tabular}{l  c  | c  c ccccccc}
			\toprule Model &
			 \texttt{mid\_channels}  & Test MSE ($\downarrow$) & Test RFNE ($\downarrow$) & PSNR ($\uparrow$) & SSIM ($\uparrow$) \\		       \midrule 
             RIVER & 16 (32)	 &   2.54e-03  &    5.43e-02 &  42.32 & 0.98 \\
	 & 32 (64)	 & 2.56e-03 & 4.89e-02 & {\bf 44.55} & {\bf 0.99}  \\
		& 64 (128)  &  {\bf 1.86e-03} & {\bf 4.48e-02}  & 44.30 & {\bf 0.99}  \\ 
         & 128 (256) & 2.69e-03 & 5.52e-02 & 42.48  & 0.98      \\
         \midrule 
          VE-diffusion & 16 (32)	 & {\bf 1.91e-01}    & {\bf  3.99e-01} & {\bf 29.45}  & {\bf 0.68} \\
	 & 32 (64)	& 4.28e-01	& 6.19e-01 & 27.20  & 0.49      \\
		& 64 (128)  & 2.29e-01 & 4.74e-01 & 27.36  & 0.55 \\ 
         & 128 (256)   & 5.12e-01 & 7.22e-01 & 26.65 & 0.42  \\
         \midrule 
          VP-diffusion & 16 (32)	 & {\bf 2.02e-03}    &  5.11e-02 & 42.32 & {\bf 0.98} \\
	 & 32 (64)	& 2.84e-03 &  5.11e-02	& {\bf 44.14} & {\bf 0.98}   \\
		& 64 (128)  & 3.58e-03 & 6.09e-02 & 42.37  & {\bf 0.98}  \\ 
         & 128 (256)  & 2.03e-03 & {\bf 4.97e-02} & 42.97 & {\bf 0.98}   \\
         \midrule 
          Stoch. interp. & 16 (32)	 & 1.44e-02    &   1.22e-01 & 36.35 & 0.92 \\
	 & 32 (64)		  &  1.09e-02   & 9.01e-02  & 41.15 & 0.95  \\
		& 64 (128) &  {\bf 3.40e-03} & {\bf  6.10e-02 } & {\bf 41.81} &  {\bf 0.98}  \\ 
         & 128 (256) & 2.41e-02 & 1.36e-01 & 38.02 & 0.90   \\
         \midrule 
          Ours & 16 (32)	 &  2.49e-03   & 5.70e-02  & 41.30  & 0.98 \\
	  ($\sigma = 0.01, \sigma_{sam} = 0$) & 32 (64)  & {\bf \underline{2.87e-04}}    & {\bf \underline{1.93e-02}}  & {\bf \underline{50.62}} & {\bf \underline{1.00}}  \\
		& 64 (128) & 3.79e-04 & 2.30e-02 & 48.88 & {\bf \underline{1.00}}  \\ 
         & 128 (256)    & 3.53e-04  & 2.24e-02 & 48.96 & {\bf \underline{1.00}} \\
   			\bottomrule \\
	\end{tabular}}
\end{table}

\subsection{Details on Training the Flow Matching Models}

\noindent {\bf Architecture.}
The vector field regressor is a transformer-based model designed to process latent vector fields and predict refined outputs with spatial and temporal dependencies. It uses key parameters like \verb|depth| and \verb|mid_depth|, which control the number of transformer encoder layers in the input, middle, and output stages. The \verb|state_size| and \verb|state_res| parameters define the number of channels and spatial resolution of the input data, while \verb|inner_dim| sets the embedding dimension for processing. The model uses learned positional encodings and a time projection to incorporate spatial and temporal context into the input, which can include \verb|input_latents|, \verb|reference_latents|, and \verb|conditioning_latents|. The input is projected into the inner dimension and passed through a series of transformer layers, with intermediate outputs from the input blocks concatenated with the output layers to refine predictions. Finally, the model projects the processed data back to the original spatial resolution and channel size using BatchNorm, producing the final vector field output.

Table \ref{tab:arch_params} summarizes the architecture parameters used for the considered tasks.

\begin{table}[!h]
	\caption{Parameters chosen for the vector field  neural network.} \label{tab:arch_params}
	\label{arch_param}
	\centering
	\scalebox{1.0}{
		\begin{tabular}{l c c c c}
			\toprule
			Parameter & Fluid flow  & Shallow-water eq.   & Navier-Stokes eq. &  Diffusion-reaction eq.  \\	
			\midrule 
			\texttt{state\_size} & 4 &  4 & 8 & 4 \\
   			\texttt{state\_res} & [8,8] &  [16, 16] & [64, 64] & [16, 16] \\
        \texttt{inner\_dim} & 512 &  512 & 512 & 512 \\
            \texttt{depth} & 4 &  4 &  4 & 4 \\
                \texttt{mid\_depth} & 5 & 5 &  5 & 5 \\
			\bottomrule \\
	\end{tabular}}
\end{table}  

\noindent {\bf Training details.}
For all the considered tasks, we train the regressor using AdamW with batch size of 32, learning rate of 0.00005, no weight decay and $\beta=(0.9,0.999)$. We use the cosine learning rate scheduler with warmup. For the fluid flow past cylinder, we train for  2000 epochs,  for the shallow-water equation and diffusion-reaction task we train for 1000 epochs, and for the Navier-Stokes task we train for 100 epochs.  Our implementation is in PyTorch, and all experiments are run on a single NVIDIA A100-SXM4 GPU
with 40 GB VRAM belonging to an internal SLURM cluster. For the Navier-Stokes task, due to GPU memory constraints in the experiments, we use gradient accumulation to simulate the desired batch size while training with a smaller per-step batch.

\subsection{Details on the Evaluation Metrics}
In addition to the standard mean squared error and relative Frobenius norm error (RFNE), we use
the Pearson correlation coefficient to measure the linear relationship between the forecasted frames and the target frames. The range of this coefficient is $[-1,1]$,  with zero implying no correlation. Correlations of $-1$ or $+1$ imply an exact linear relationship. Positive correlations imply that as $x$ increases, so does $y$. Negative correlations imply that as $x$ increases, $y$ decreases. 
In addition,  we use peak signal-to-noise ratio (PSNR) to evaluate the quality of signal representation against corrupting noise, and structural similarity index measure (SSIM) \citep{wang2004image} to assess perceptual results. The presented results are computed by averaging over batch size and number of sample generations.

\subsection{Standard Deviations for the Presented Results}

Table \ref{tab:simpleflow_stdev}-\ref{tab:ns_stdev} provide the standard deviation of the results presented in the main paper.

\begin{table}[!h]
	\caption{Standard deviation results for  the fluid flow past a cylinder task using different choices of probability paths for flow matching. Results are averaged over 5 generations.}
	\label{tab:simpleflow_stdev}
	\centering
	\scalebox{0.9}{
		\begin{tabular}{l c  c ccccccc}
			\toprule
			Model &  Test MSE ($\downarrow$) & Test RFNE ($\downarrow$) & PSNR ($\uparrow$) & SSIM ($\uparrow$) \\	
			\midrule 
			RIVER & 1.04e-03  & 2.70e-02 & 1.46 &  4.92e-03  \\
			VE-diffusion  & 5.22e-02 & 3.34e-01 & 6.02e-01 & 3.14e-02 \\
			VP-diffusion & 7.09e-03 & 3.96e-02 & 4.26e-01 & 3.30e-03   \\
   			SI $(b_t = t^2)$  & 6.35e-05 & 3.53e-02 & 5.39e-02 &  4.03e-04  \\
            SI $(b_t = t)$  & 1.48e-04 & 8.70e-02 & 2.97e-02 & 5.30e-04 \\
      \midrule 
      Ours ($\sigma  = 0.01$, $\sigma_{sam}$ = 0, RK4) & 4.26e-06  & 5.29e-03 & 3.61e-02 & 3.63e-05 \\
			\bottomrule
	\end{tabular}}
\end{table}

\begin{table}[!h]
	\caption{Standard deviation for the results of ablation study for the fluid flow past a cylinder task. Results are averaged over 5 generations.}
	\label{tab:simpleflow_ablation_stdev}
	\centering
	\scalebox{0.9}{
		\begin{tabular}{l c c | c  c ccccccc}
			\toprule
			$
   \sigma$ & sampler & $N$ & Test MSE ($\downarrow$) & Test RFNE ($\downarrow$) &  PSNR ($\uparrow$) & SSIM ($\uparrow$) \\		       \midrule 
    0.0 & Euler & 5 &  1.00e-05  & 1.25e-02  & 4.37e-02 & 9.76e-05  \\
			0.01  & Euler & 5 & 6.71e-06 & 4.14e-03 & 5.17e-02 & 4.81e-05 \\
			0.1  & Euler & 5 & 4.88e-05  & 4.11e-02 & 5.39e-02 & 3.91e-04 \\
			\midrule 
			 0.0 & Euler & 10 & 1.24e-05 & 1.29e-02 & 4.28e-02  & 8.27e-05 \\
			0.01  & Euler & 10 & 8.10e-06  & 4.37e-03 & 6.42e-02 & 6.21e-05 \\
			0.1  & Euler & 10 & 4.32e-05  & 4.02e-02 & 5.13e-02 & 3.06e-04 \\
   			\midrule 
			0.0 & RK4 & 10 & 3.45e-06 & 7.56e-03  & 2.57e-02 & 2.46e-05  \\
			0.01  & RK4 & 10 & 4.26e-06  & 5.29e-03 & 3.61e-02  &  3.63e-05  \\
			0.1 & RK4 & 10 & 8.62e-06   & 5.88e-02 & 2.69e-02 & 2.61e-04 \\
   			\midrule 
                0.0 & Euler & 20 & 8.14e-06  & 1.31e-02  & 3.11e-02 & 5.02e-05 \\
			0.01  & Euler &  20  & 4.25e-06 & 4.51e-03 & 3.66e-02 & 2.77e-05  \\
			0.1  & Euler & 20 & 2.12e-05  & 3.96e-02 &  2.21e-02 & 1.03e-04 \\
   			\midrule 
			0.0 & RK4 & 20 &  1.54e-06  & 7.49e-03 & 1.84e-02 & 1.26e-05 \\
			0.01  & RK4 & 20 &  3.99e-06   & 1.29e-02 & 1.74e-02 & 2.95e-05\\
			0.1  & RK4 & 20 &  2.84e-05 & 5.85e-02 & 8.10e-03 & 8.33e-05 \\
   			\bottomrule
	\end{tabular}}
\end{table}

\begin{table}[!h]
	\caption{Standard deviation results for the shallow-water equation task using different choices of probability paths for flow matching. Results are averaged over 5 generations.}
	\label{tab:swe_stdev}
	\centering
	\scalebox{0.9}{
		\begin{tabular}{l c  c ccccccc}
			\toprule
			Model &  Test MSE ($\downarrow$) &  Test RFNE ($\downarrow$) &  PSNR ($\uparrow$) & SSIM ($\uparrow$) \\	
			\midrule 
			RIVER & 2.28e-05 & 8.67e-02 & 8.39e-02 & 1.23e-03 \\
			VE-diffusion  & 7.29e-04 & 3.29e-01 & 2.17e-01 & 1.68e-02  \\
			VP-diffusion & 2.55e-04 & 1.10e-01 & 3.62e-01 & 1.56e-02 \\
   			SI $(b_t = t^2)$  & 2.97e-06  & 1.03e-01 & 1.86e-02 & 2.03e-04 \\
            	SI $(b_t = t)$  & 1.37e-06 & 7.18e-02 & 4.11e-03 & 2.12e-04 \\
			\midrule 
Ours ($\sigma$ = 0.1, $\sigma_{sam}$ = 0, RK4)
   & 1.55e-06 & 7.06e-02 & 5.98e-03 & 2.11e-04 \\
			\bottomrule
	\end{tabular}}
\end{table}

\begin{table}[!h]
	\caption{Standard deviation results for  the diffusion-reaction equation task using different choices of probability paths for flow matching. Results are averaged over 5 generations.}
	\label{tab:reacdiff_stdev}
	\centering
	\scalebox{0.9}{
		\begin{tabular}{l c  c ccccccc}
			\toprule
			Model &  Test MSE ($\downarrow$) &  Test RFNE ($\downarrow$) &  PSNR ($\uparrow$) & SSIM ($\uparrow$) \\	
			\midrule 
			RIVER &  5.56e-04 & 2.54e-01 & 7.88e-01 & 1.37e-02\\
			VE-diffusion  & 1.51e-02 & 1.11 & 5.89e-01 &  1.22e-02 \\
			VP-diffusion & 1.55e-03 & 6.15e-01 & 1.06 & 7.08e-03  \\
   			SI $(b_t = t^2)$ & 6.44e-04  & 1.15 & 8.60e-02 & 6.98e-04     \\
                SI $(b_t = t)$  & 8.15e-07  & 5.01e-02 & 5.67e-03 & 8.25e-05  \\
			\midrule
			Ours ($\sigma$ = 0, $\sigma_{sam} = $ 0,  RK4) & 8.47e-07 & 4.77e-02 &  6.22e-03 & 9.53e-05 \\ 
			\bottomrule
	\end{tabular}}
\end{table}

\begin{table}[!h]
	\caption{Standard deviation results for the Navier-Stokes equation task using different choices of probability paths for flow matching. Results are averaged over 5 generations. }
	\label{tab:ns_stdev}
	\centering
	\scalebox{0.9}{
		\begin{tabular}{l c  c ccccccc}
			\toprule
			Model &  Test MSE ($\downarrow$) &  Test RFNE ($\downarrow$) &  PSNR ($\uparrow$) & SSIM ($\uparrow$) \\	
			\midrule 
			RIVER & 7.52e-04 & 8.31e-01 & 9.34e-02 & 3.80e-03 \\
			VE-diffusion  & 6.53e-04 & 1.21 & 1.19e-01 & 1.61e-03 \\
			VP-diffusion &  3.88e-03 & 1.62 & 1.64e-01 & 5.74e-03 \\
   			SI  $(b_t = t^2)$   & 5.11e-06 & 3.27e-02 & 2.37e-02 & 3.34e-04  \\
             SI $(b_t = t)$  & 4.87e-07 & 1.09e-02 & 1.98e-03 & 6.22e-05 \\
			\midrule 
Ours ($\sigma$ = 0.1, $\sigma_{sam}$ = 0, RK4)
   & 4.81e-07 & 1.06e-02 & 2.30e-03 &  6.01e-05 \\
			\bottomrule
	\end{tabular}}
\end{table}

\end{document}